\newtheorem*{remark}{Remark}
\newtheorem{Definition}{Definition}
\newtheorem{Theorem}{Theorem}
\newtheorem{Lemma}{Lemma}
\newtheorem{Assumption}{Assumption}
\newtheorem{Example}{Example}
\date{29 Nov 2022}
\begin{document}

\title{Counterfactual Supervision-based Information Bottleneck for Out-of-Distribution Generalization}

\author[ ]{Bin Deng}
\author[ ]{Kui Jia}
\affil[ ]{South China University of Technology}
\affil[ ]{\textit{eebindeng@mail.scut.edu.cn}}

\maketitle
\begin{abstract}
 Learning invariant (causal) features for out-of-distribution (OOD) generalization has attracted extensive attention recently, and among the proposals invariant risk minimization (IRM) is a notable solution. In spite of its theoretical promise for linear regression, the challenges of using IRM in linear classification problems remain. By introducing the information bottleneck (IB) principle into the learning of IRM, IB-IRM approach has demonstrated its power to solve these challenges. In this paper, we further improve IB-IRM from two aspects. First, we show that the key assumption of support overlap of invariant features used in IB-IRM is strong for the guarantee of OOD generalization and it is still possible to achieve the optimal solution without this assumption. Second, we illustrate two failure modes that IB-IRM (and IRM) could fail for learning the invariant features, and to address such failures, we propose a \textit{Counterfactual Supervision-based Information Bottleneck (CSIB)} learning algorithm that provably recovers the invariant features. By requiring counterfactual inference, CSIB works even when accessing data from a single environment. Empirical experiments on several datasets verify our theoretical results.
\end{abstract}

\section{Introduction}\label{section-introduction}
Modern machine learning models are prone to catastrophic performance loss during deployment when the test distribution is different from the training distribution. This phenomenon has been repeatedly witnessed and intentionally exposed in many examples \cite{szegedy2013intriguing, rosenfeld2018elephant, geirhos2018imagenet, nguyen2015deep, gururangan2018annotation}. Among the explanations, shortcut learning \cite{geirhos2020shortcut} is considered as a main factor causing this phenomenon. A nice example is about the classification of images of cows and camels --- a trained convolutional network tends to recognize cows or camels by learning spurious features from image backgrounds (e.g., green pastures for cows and deserts for camels), rather than learning the causal shape features of the animals \cite{beery2018recognition}; decisions based on the spurious features would make the learned models fail when cows or camels appear in unusual, different environments. Machine learning models are expected to have the capability of out-of-distribution (OOD) generalization and avoid shortcut learning.

To achieve OOD generalization, recent theories \cite{arjovsky2019invariant,krueger2021out,ahuja2020invariant,pezeshki2021gradient,ahuja2021invariance} are motivated by causality literature \cite{pearl2009causality, peters2017elements}, and resort to extraction of the invariant, causal features and establishing the relevant conditions under which machine learning models have the guaranteed generalization. Among these works, invariant risk minimization (IRM) \cite{arjovsky2019invariant} is a notable learning paradigm that incorporates the invariance principle \cite{peters2016causal} into practice.
In spite of the theoretical promise of IRM, it is only applicable to problems of linear regression. For other problems such as linear classification, Ahuja et al. \cite{ahuja2021invariance} first show that for OOD generalization, linear classification is more difficult (see Theorem \ref{theorem-impossibility}), and propose a new learning method of information bottleneck-based invariant risk minimization (IB-IRM) based on the support overlap assumption (Assumption \ref{assp-inv_feature_overlap}). In this work, we closely investigate the conditions identified in \cite{ahuja2021invariance} and propose improved results for OOD generalization of linear classification. Our technical contributions are as follows.

\textbf{Contributions.} In \cite{ahuja2021invariance}, a notion of support overlap of invariant features is assumed in order to make the OOD generalization of linear classification successful. In this work, we first show that this assumption is strong and it is still possible to achieve such goal without this assumption. Then, we examine whether the IB-IRM proposed in \cite{ahuja2021invariance} is sufficient to learn invariant features for linear classification, and find that IB-IRM (and IRM) could fail in two modes. We then analyze two failure modes of IB-IRM and IRM, in particular when the spurious features in training environments capture sufficient information for the task of interest but have less information than the invariant features. Based on the above analyses, we propose a new method, termed counterfactual supervision-based information bottleneck (CSIB), to address such failures. We prove that, without the need of the support overlap assumption, CSIB is theoretically guaranteed for the success of OOD generalization in linear classification. Notably, CSIB works even when accessing data from a single environment. Finally, we design three synthetic datasets and a colored minst dataset based on our used motivating examples; experiments demonstrate the effectiveness of CSIB empirically.

The rest of this article is organized as follows. The learning problem of out-of-distribution (OOD) generalization is formulated in Section \ref{section-ood-formulations}. In Section \ref{section-assumptions}, we study the learnability of the OOD generalization with different assumptions to the training and test environments. Using these assumptions, two failure modes of previous methods (IRM and IB-IRM) are analysed in Section \ref{section-failures}. Based on the above analysis, our method is then proposed in Section \ref{section-method}. The experiments are reported in Section \ref{section-experiments}. Finally, we discuss the related works in Section \ref{section-related-work} and provide some conclusions and limitations of our work in Section \ref{section-conclusion-and-limitation}.
All the proofs and details of experiments are given in the appendices.

\section{OOD generalization: background and formulations}\label{section-ood-formulations}

\subsection{Background on structural equation models}
Before introducing our formulations of OOD generalization, we provide a detailed background on structural equation models (SEMs) \cite{pearl2009causality, arjovsky2019invariant}.
\begin{Definition}[Structural Equation Model (SEM)]
A structural equation model (SEM) $\mathcal{C}:=(\mathcal{S},N)$ governing the random vector $X = (X_1,...,X_d)$ is a set of structural equations:
\begin{equation*}
    \mathcal{S}_i: X_i \leftarrow f_i(Pa(X_i), N_i),
\end{equation*}
where $Pa(X_i)\subseteq \{X_1,...,X_d\}\setminus \{X_i\}$ are called the parents of $X_i$, and $N_i$ are independent noise random variables. For every SEM, we yield a directed acyclic graph (DAG) $\mathcal{G}$ by adding one vertex for each $X_i$ and directed edges from each parent in $Pa(X_i)$ (the causes) to child $X_i$ (the effect).
\end{Definition}

\begin{Definition}[Intervention]\label{def-intervention}
    Consider a SEM $\mathcal{C} = (\mathcal{S}, N)$. An intervention $e$ on $\mathcal{C}$ consists of replacing one or several of its structural equations to obtain an intervened SEM $\mathcal{C}^e = (\mathcal{S}^e,N^e)$, with structural equations:
    \begin{equation*}
        \mathcal{S}^e_i: X^e_i \leftarrow f^e_i(Pa^e(X^e_i), N^e_i),
    \end{equation*}
    The variable $X^e$ is intervened if $\mathcal{S}_i \neq \mathcal{S}^e_i$ or $N_i \neq N^e_i$.
\end{Definition}

In a SEM $\mathcal{C}$, we can draw samples from the observational distribution $\mathbb{P}(X)$ according to the topological ordering of its DAG $\mathcal{G}$. We also can manipulate (intervene) an unique SEM $\mathcal{C}$ in different ways, indexed by $e$, to different but related SEMs $\mathcal{C}^e$, which results in different interventional distributions $\mathbb{P}(X^e)$. Such family of interventions are used to model the environments.

\subsection{Formulations of OOD generalization}
In this paper, we study the OOD generalization problem by following the linear classification structural equation model in below \cite{ahuja2021invariance}.
\begin{Assumption}[Linear classification SEM $\mathcal{C}_{ood}$]\label{assp-linearSEM}
\begin{equation}
\begin{aligned}
& Y \leftarrow \bm{1}(w^*_{inv}\cdot Z_{inv}) \oplus N, \quad N\sim Bernoulli(q), \ q < \frac{1}{2};\\
& X \leftarrow S(Z_{inv}, Z_{spu}),
 \end{aligned}
\end{equation}
where $w^*_{inv} \in \mathbb{R}^m$ is the labeling hyperplane, $Z_{inv}\in \mathbb{R}^m$, $Z_{spu}\in \mathbb{R}^o$, $X\in\mathbb{R}^d$, $\oplus$ is the XOR operator, $S\in\mathbb{R}^{d\times (m+o)}$ is invertible ($d = m+o$), $\cdot$ is the dot product function, and $\bm{1}(a) = 1$ if $a\geq 0$ otherwise $0$.
\end{Assumption}

\begin{figure}
  \centering
  \begin{subfigure}[b]{0.24\textwidth}
    \centering
    \includegraphics[height=1.0in]{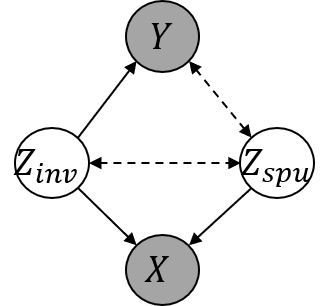}
    \caption{}\label{fig-DAG0}
  \end{subfigure}
  \begin{subfigure}[b]{0.24\textwidth}
    \centering
    \includegraphics[height=1.0in]{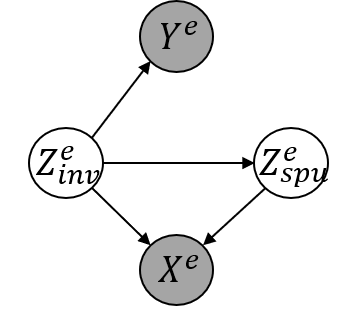}
    \caption{}\label{fig-DAG1}
  \end{subfigure}
  \begin{subfigure}[b]{0.24\textwidth}
    \centering
    \includegraphics[height=1.0in]{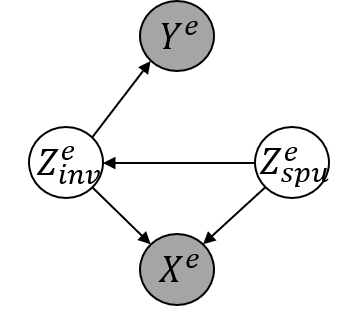}
    \caption{}\label{fig-DAG2}
  \end{subfigure}
  \begin{subfigure}[b]{0.24\textwidth}
    \centering
    \includegraphics[height=1.0in]{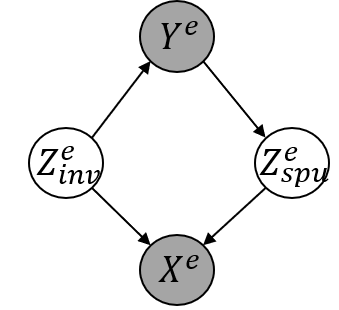}
    \caption{}\label{fig-DAG3}
  \end{subfigure}
  \caption{(a) DAG of the SEM $\mathcal{C}_{ood}$ (Assumption \ref{assp-linearSEM}). (b) - (d) DAGs of the interventional SEM $\mathcal{C}^e_{ood}$ in the training environments $\mathcal{E}_{tr}$ with respect to different correlations between $Z_{inv}$ and $Z_{spu}$. Grey nodes denote observed variables and white nodes represent unobserved variables. Dashed lines denote the edges which might vary across the interventional environments and even be absent in some scenarios, whilst solid lines indicate that they are invariant across all the environments. All exogenous noise variable are omitted in the DAGs.}
\end{figure}

The SEM $\mathcal{C}_{ood}$ governs four random variables $\{X, Y, Z_{inv}, Z_{spu}\}$ and its directed acyclic graph (DAG) is illustrated in Figure \ref{fig-DAG0}, where the exogenous noise variable $N$ is omitted.
Following Definition \ref{def-intervention}, each intervention $e$ generates a new environment $e$ with interventional distribution $\mathbb{P}(X^e,Y^e,Z^e_{inv}, Z^e_{spu})$. We assume only the variables of $X^e$ and $Y^e$ are observable. In OOD generalization, we are interested in a set of environments $\mathcal{E}_{all}$ defined as below.

\begin{Definition}[$\mathcal{E}_{all}$] \label{def-environments}
    Consider the SEM $\mathcal{C}_{ood}$ (Assumption \ref{assp-linearSEM}) and the learning goal of predicting $Y$ from $X$. Then, the set of all environments $\mathcal{E}_{all}(\mathcal{C}_{ood})$ indexes all the interventional distributions $\mathbb{P}(X^e, Y^e)$ obtainable by valid interventions $e$. An intervention $e\in \mathcal{E}_{all}(\mathcal{C}_{ood})$ is valid as long as (i) the DAG remains acyclic, (ii) $\mathbb{P}(Y^e|Z^e_{inv}) = \mathbb{P}(Y|Z_{inv})$, and (iii) $\mathbb{P}(X^e|Z^e_{inv}, Z^e_{spu}) = \mathbb{P}(X|Z_{inv}, Z_{spu})$.
\end{Definition}

The Assumption \ref{assp-linearSEM} shows that $Z_{inv}$ is the cause of the response $Y$. We name $Z_{inv}$ the invariant features or causal features because $\mathbb{P}(Y^e|Z^e_{inv}) = \mathbb{P}(Y|Z_{inv})$ always holds among all valid interventional SEMs $\mathcal{C}^e_{ood}$, as defined in Definition \ref{def-environments}. The $Z_{spu}$ is called spurious features, because $\mathbb{P}(Y^e|Z^e_{spu})$ may vary in different environments of $\mathcal{E}_{all}$.

Let $D=\{D^e\}_{e\in\mathcal{E}_{tr}}$ be the training data gathered from a set of training environments $\mathcal{E}_{tr}\subset \mathcal{E}_{all}$, where $D^e = \{(x^e_i, y^e_i)\}^{n_e}_{i=1}$ is the dataset from environment $e$ with each instance $(x^e_i, y^e_i)$ i.i.d. drawn from $\mathbb{P}(X^e, Y^e)$. Let $\mathcal{X}^e \subseteq \mathbb{R}^d$ and $\mathcal{Y}\subseteq \{0,1\}$ be the support sets of $X^e$ and $Y$, respectively. Given observed data $D$, the goal of OOD generalization is to find a predictor $f: \mathbb{R}^d \rightarrow \mathcal{Y}$ such that it can perform well across a set of OOD environments (test environments) $\mathcal{E}_{ood}$ of interest, where $\mathcal{E}_{ood} \subseteq \mathcal{E}_{all}$. Formally, it is expected to minimize
\begin{equation}\label{equ-ood}
\max_{e\in \mathcal{E}_{ood}} R^e(f),
\end{equation}
where $R^e(f):=\mathbb{E}_{X^e, Y^e}[l(f(X^e), Y^e)]$ is the risk under the environment $e$ with $l(\cdot, \cdot)$ the 0-1 loss function. Since $\mathcal{E}_{ood}$ may be different from $\mathcal{E}_{tr}$, this learning problem is called OOD generalization.
We assume the predictor $f = w\circ \Phi$ includes a feature extractor $\Phi: \mathcal{X}\rightarrow \mathcal{H}$ and a classifier $w: \mathcal{H}\rightarrow \mathcal{Y}$. With a slight abuse of notation, we also let the classifier $w$ and feature extractor $\Phi$ be parameteried by themselves respectively as $w\in\mathbb{R}^{c+1}$ and $\Phi\in\mathbb{R}^{c\times d}$ with $c$ the number of feature dimension.

\subsection{Background on IRM and IB-IRM}
To minimize Equation (\ref{equ-ood}), two notable solutions of IRM \cite{arjovsky2019invariant} and IB-IRM \cite{ahuja2021invariance} are listed as follows:
\begin{equation}\label{equ-IRM}
\text{IRM:} \quad \min_{w,\Phi} \frac{1}{|\mathcal{E}_{tr}|}\sum_{e\in\mathcal{E}_{tr}} R^e(w\circ\Phi), \ \text{s.t.} \ w\in \arg \min_{\tilde{w}} R^e(\tilde{w}\circ \Phi), \forall e \in \mathcal{E}_{tr},
\end{equation}
\begin{equation}\label{equ-IBIRM}
\text{IB-IRM:} \quad \min_{w, \Phi} \sum_{e\in\mathcal{E}_{tr}} h^e(\Phi), \ \text{s.t.} \ \frac{1}{|\mathcal{E}_{tr}|}\sum_{e\in \mathcal{E}_{tr}} R^e(w\circ\Phi) \leq r^{th}, w\in \arg \min_{\tilde{w}} R^e(\tilde{w}\circ \Phi), \forall e\in \mathcal{E}_{tr},
\end{equation}
where $R^e(w\circ\Phi)=\mathbb{E}_{X^e, Y^e}[l(w\circ\Phi(X^e), Y^e)]$, and $h^e(\Phi) = H(\Phi(X^e))$ with $H$ the Shannon entropy (or a lower bounded differential entropy) and $r^{th}$ is the threshold on the average risk. If we drop the invariance constraint from IRM and IB-IRM, we get standard empirical risk minimization (ERM) and information bottleneck-based empirical risk minimization (IB-ERM) respectively. The use of entropy constraint in IB-IRM is inspired from the information bottleneck principle \cite{tishby1999information} where mutual information $I(X; \Phi(X))$ is used for information compression. Since the representation $\Phi(X)$ is a deterministic mapping of $X$, we have
\begin{equation}
    I(X;\Phi(X)) = H(\Phi(X)) - H(\Phi(X)|X) = H(\Phi(X)),
\end{equation}
thus minimizing the entropy of $\Phi(X)$ is equivalent to minimizing the mutual information $I(X; \Phi(X))$. In brief, the optimization goal of IB-IRM is to select the one that has the least entropy among all highly predictive invariant predictors.

\section{OOD generalization: assumptions and learnability}\label{section-assumptions}
To study the learnability of OOD generalization, we make following definition.
\begin{Definition}
Given $\mathcal{E}_{tr}\subset \mathcal{E}_{all}$ and $\mathcal{E}_{ood}\subseteq \mathcal{E}_{all}$. We say an algorithm succeeds to solve OOD generalization with respect to ($\mathcal{E}_{tr}, \mathcal{E}_{ood}$) if the predictor $f^*\in \mathcal{F}$ returned by this algorithm satisfies the following equation:
\begin{equation}
    \max_{e\in \mathcal{E}_{ood}} R^e(f^*) = \min_{f\in\mathcal{F}}\max_{e\in \mathcal{E}_{ood}} R^e(f),
\end{equation}
where $\mathcal{F}$ is the learning hypothesis (a function set including all possible linear classifier).
Otherwise we say it fails to solve OOD generalization.
\end{Definition}

\begin{figure}
  \centering
  \begin{subfigure}[b]{0.3\textwidth}
    \includegraphics[width=1.0\textwidth]{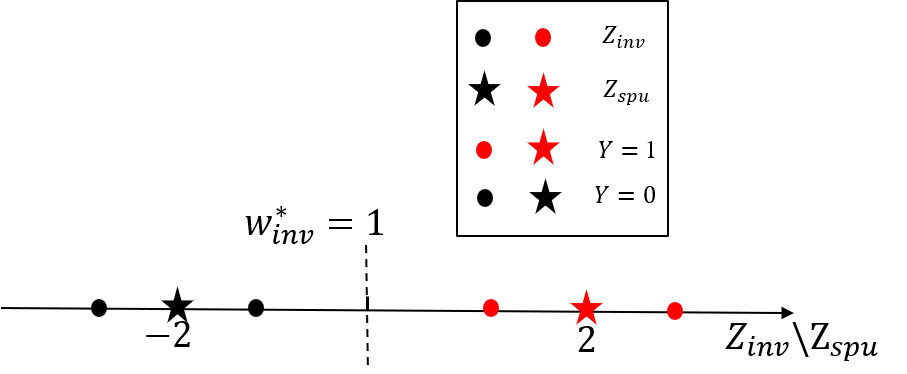}
    \caption{}\label{fig-example1}
  \end{subfigure}
  ~~
  \begin{subfigure}[b]{0.3\textwidth}
    \includegraphics[width=1.0\textwidth]{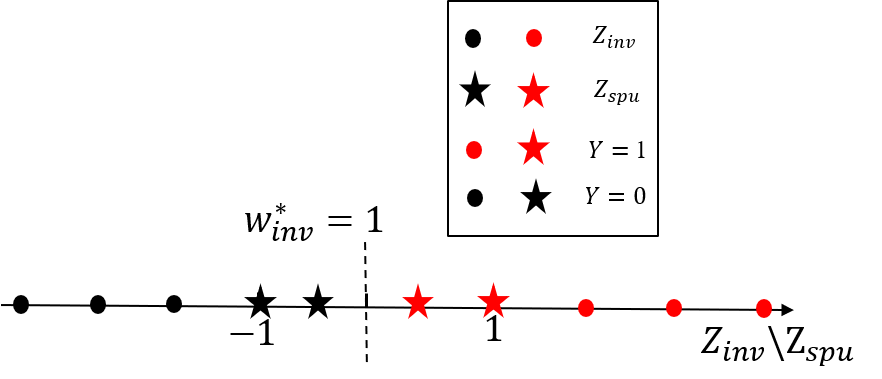}
    \caption{}\label{fig-example2}
  \end{subfigure}
  ~~
  \begin{subfigure}[b]{0.3\textwidth}
    \includegraphics[width=1.0\textwidth]{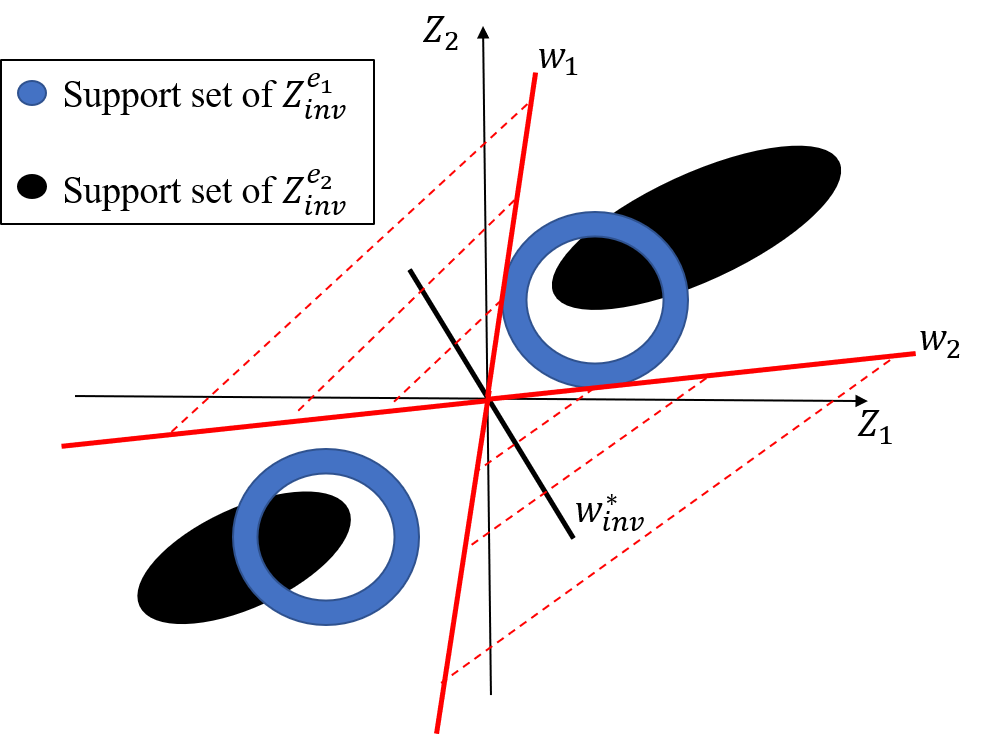}
    \caption{}\label{fig-example-supprtset}
  \end{subfigure}
  \caption{(a) Example \ref{example-1}. (b) Example \ref{example-2}. (c) Example illustration. Here, $dim(Z_{inv})=2$ and $Z_{inv} = (Z_1, Z_2)$. The blue and black regions represent the support sets of $Z^{e_1}_{inv}$ and $Z^{e_2}_{inv}$, corresponding to the environments $e_1$ and $e_2$ respectively. $\mathcal{E}_{tr} = \{e_1\}$ is the training environment and $\mathcal{E}_{ood} = \{e_2\}$ is the OOD environment. Although Assumption \ref{assp-inv_feature_overlap} does not hold in this example, any zero-error classifier with $\Phi(X) = Z_{inv}$ on the $e_1$ environment data would clearly make the classification error to be zero in $e_2$, thus succeeds to solve OOD generalization.}
\end{figure}

So far, we have omitted how different environments of $\mathcal{E}_{tr}$ and $\mathcal{E}_{ood}$ exactly are to enable OOD generalization. Different assumptions about $\mathcal{E}_{tr}$ and $\mathcal{E}_{ood}$ make the OOD generalization problem different.

\subsection{Assumptions about the training environments $\mathcal{E}_{tr}$}
Define the support set of the invariant (\emph{resp.}, spurious) features $Z^e_{inv}$ (\emph{resp.}, $Z^e_{spu}$) in environment $e$ as $\mathcal{Z}^e_{inv}$ (\emph{resp.}, $\mathcal{Z}^e_{spu}$). In general, we make following assumptions to the invariant features $\mathcal{Z}^e_{inv}$ in the training environments $\mathcal{E}_{tr}$.

\begin{Assumption}[Bounded invariant features]\label{assp-bound_inv_feature}
$\cup_{e\in\mathcal{E}_{tr}} \mathcal{Z}^e_{inv}$ is a bounded set\footnote{A set $\mathcal{Z}$ is bounded if $\exists M<\infty$ such that $\forall z\in\mathcal{Z}, \|z\|\leq M$.}.
\end{Assumption}
\begin{Assumption}[Strictly separable invariant features]\label{assp-sep_inv_feature}
$\forall z\in\cup_{e\in\mathcal{E}_{tr}}\mathcal{Z}^e_{inv}, w^*_{inv}\cdot z \neq 0.$
\end{Assumption}

The difficulties of OOD generalization is due to the spurious correlations between $Z_{inv}$ and $Z_{spu}$ in the training environments $\mathcal{E}_{tr}$. In this paper, we consider three modes induced by different correlations between $Z_{inv}$ and $Z_{spu}$ as shown below.
\begin{Assumption}[Spurious correlation 1] \label{assp-mode-1}
Assume each $e\in \mathcal{E}_{tr}$,
\begin{equation}
\begin{aligned}
& Z^e_{spu} \leftarrow AZ^e_{inv} + W^e; \\
 \end{aligned}
\end{equation}
where, $A\in \mathbb{R}^{o\times m}$, and $W^e\in\mathbb{R}^o$ is a continuous (or discrete with each component supported on at least two distinct values), bounded, and zero mean noise variable.
\end{Assumption}

\begin{Assumption}[Spurious correlation 2] \label{assp-mode-2}
Assume each $e\in \mathcal{E}_{tr}$,
\begin{equation}
\begin{aligned}
& Z^e_{inv} \leftarrow AZ^e_{spu} + W^e; \\
 \end{aligned}
\end{equation}
where, $A\in \mathbb{R}^{m\times o}$, and $W^e\in\mathbb{R}^m$ is a continuous (or discrete with each component supported on at least two distinct values), bounded, and zero mean noise variable.
\end{Assumption}

\begin{Assumption}[Spurious correlation 3] \label{assp-mode-3}
Assume each $e\in \mathcal{E}_{tr}$,
\begin{equation}
\begin{aligned}
& Z^e_{spu} \leftarrow W_1^e Y^e + W_0^e (1-Y^e); \\
 \end{aligned}
\end{equation}
where $W_0^e\in\mathbb{R}^o$ and $W_1^e\in\mathbb{R}^o$ are independent noise variables.
\end{Assumption}

For each $e\in\mathcal{E}_{tr}$, the DAGs of its corresponding interventional SEMs $\mathcal{C}^e_{ood}$ with respect to Assumptions \ref{assp-mode-1}, \ref{assp-mode-2}, and \ref{assp-mode-3} are illustrated in Figures \ref{fig-DAG1}, \ref{fig-DAG2}, and \ref{fig-DAG3}, respectively. It is worth to note that although the DAGs are identical across all training environments in each mode of Assumptions \ref{assp-mode-1}, \ref{assp-mode-2}, and \ref{assp-mode-3}, the interventional SEMs $\mathcal{C}^e_{ood}$ among different training environments are different due to the interventions on the exogenous noise variables.

\subsection{Assumptions about the OOD environments $\mathcal{E}_{ood}$}
\begin{Theorem}[Impossibility of guaranteed OOD generalization for linear classification \cite{ahuja2021invariance}]\label{theorem-impossibility}
Suppose $\mathcal{E}_{ood} = \mathcal{E}_{all}$. If for all the training environments $\mathcal{E}_{tr}$, the latent invariant features are bounded and strictly separable, i.e., Assumptions \ref{assp-bound_inv_feature} and \ref{assp-sep_inv_feature} hold, then every deterministic algorithm fails to solve the OOD generalization.
\end{Theorem}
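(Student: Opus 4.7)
The plan is to establish impossibility by an indistinguishability (two-worlds) argument: for any deterministic algorithm $\mathcal{A}$, I would exhibit two valid instantiations of the SEM $\mathcal{C}_{ood}$ that produce identical training distributions on $\mathcal{E}_{tr}$ but have incompatible minimax-optimal predictors over $\mathcal{E}_{ood}=\mathcal{E}_{all}$. Since $\mathcal{A}$ sees only $\mathbb{P}(X^e,Y^e)$ on the training environments, it must return the same classifier $\hat f$ in both worlds, so $\hat f$ is strictly suboptimal in at least one.

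The first step is to identify the minimax-optimal classifier and its uniqueness. Every $f\in\mathcal{F}$ is linear, and since $S$ is invertible, it can be written as $x\mapsto \bm{1}(\alpha\cdot Z_{inv}+\beta\cdot Z_{spu}+\gamma\ge 0)$ for some $(\alpha,\beta,\gamma)$. Conditions (ii)--(iii) of Definition~\ref{def-environments} leave the joint distribution of $(Z^e_{inv},Z^e_{spu})$ essentially free across $\mathcal{E}_{all}$, so by choosing an intervention that (a) makes $\beta\cdot Z^e_{spu}$ dominate with the wrong sign whenever $\beta\ne 0$, (b) concentrates $Z^e_{inv}$ in the nonempty region where $\mathrm{sign}(\alpha\cdot z)\ne \mathrm{sign}(w^*_{inv}\cdot z)$ when $\alpha$ is not a positive multiple of $w^*_{inv}$, or (c) concentrates $Z^e_{inv}$ near the shifted boundary when $\gamma\ne 0$, any deviation from $(\alpha,\beta,\gamma)=(c\,w^*_{inv},0,0)$ with $c>0$ raises the worst-case OOD risk strictly above the Bayes floor $q$.

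The second step constructs the two indistinguishable worlds using the boundedness hypothesis. I would pick a training environment in which $Z^e_{spu}$ is a bounded deterministic function of $Z^e_{inv}$, so that $(Z_{inv},Z_{spu})$ is concentrated on a lower-dimensional set (for instance, $Z^e_{spu}=Z^e_{inv}$ with $S$ the identity). By swapping the roles of $Z_{inv}$ and $Z_{spu}$ via a different invertible $S'$ together with a matching labeling direction $\tilde{w}^*_{inv}$, one obtains a second SEM that still satisfies Assumptions~\ref{assp-linearSEM}, \ref{assp-bound_inv_feature}, and \ref{assp-sep_inv_feature} and induces exactly the same $\mathbb{P}(X,Y)$ on $\mathcal{E}_{tr}$; yet, by the first step, its unique minimax direction on $X$ is different from that of the first world. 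The algorithm's output $\hat f$ cannot be optimal for both worlds simultaneously, so it fails to attain the minimax in at least one.

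The main obstacle is verifying inside the first step that each adversarial intervention used to defeat a suboptimal $(\alpha,\beta,\gamma)$ is actually valid under Definition~\ref{def-environments}, i.e., preserves $\mathbb{P}(Y^e\mid Z^e_{inv})=\mathbb{P}(Y\mid Z_{inv})$ and $\mathbb{P}(X^e\mid Z^e_{inv},Z^e_{spu})=\mathbb{P}(X\mid Z_{inv},Z_{spu})$ while still permitting sufficiently extreme distributions of $Z^e_{spu}$ and sufficiently concentrated distributions of $Z^e_{inv}$. Handling the Bernoulli noise $N$ with $q<1/2$ is bookkeeping: it contributes an additive $q$ on both sides of every risk comparison, so strict inequalities between the aligned and the misaligned directions are preserved.
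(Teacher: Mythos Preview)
The paper does not supply its own proof of Theorem~\ref{theorem-impossibility}; it is quoted verbatim from \cite{ahuja2021invariance} and no argument for it appears in the main text or the appendix (only Theorems~\ref{theorem-analysis2} and~\ref{theorem-CSIB} are proved there). So there is no in-paper proof to compare your proposal against.

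That said, a brief assessment of your plan: the two-worlds indistinguishability skeleton is the right one, and your Step~1 (any classifier not of the form $x\mapsto\bm{1}(c\,w^*_{inv}\cdot Z_{inv}\ge 0)$ can be defeated by some environment in $\mathcal{E}_{all}$) is essentially correct and is exactly what the validity constraints in Definition~\ref{def-environments} permit. Your Step~2, however, is heavier than necessary and has a scope issue. By ``picking a training environment in which $Z^e_{spu}$ is a deterministic function of $Z^e_{inv}$'' and then swapping the roles of invariant and spurious via a different $S'$, you are effectively choosing a particular $\mathcal{E}_{tr}$; the theorem, as stated, asserts failure for \emph{every} $\mathcal{E}_{tr}$ satisfying Assumptions~\ref{assp-bound_inv_feature} and~\ref{assp-sep_inv_feature}, not just for a convenient one you construct. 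The cleaner route---and the one used in \cite{ahuja2021invariance}---exploits the hypotheses directly: boundedness plus strict separability of $\cup_{e\in\mathcal{E}_{tr}}\mathcal{Z}^e_{inv}$ guarantees a positive margin, hence infinitely many hyperplanes $\tilde w^*_{inv}$ that agree with $w^*_{inv}$ on all training invariant features. Two SEMs that share $S$ and the training latent distributions but use distinct such hyperplanes induce identical $\{\mathbb{P}(X^e,Y^e)\}_{e\in\mathcal{E}_{tr}}$, yet by your Step~1 have distinct minimax-optimal predictors over $\mathcal{E}_{all}$. This argument works for \emph{any} $\mathcal{E}_{tr}$ meeting the assumptions, needs no role-swap or special correlation between $Z_{inv}$ and $Z_{spu}$, and avoids the side conditions (e.g., compatible dimensions $m=o$) your swap construction implicitly requires.
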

Above theorem shows that it is impossible to solve OOD generalization if $\mathcal{E}_{ood} = \mathcal{E}_{all}$. To make it learnable, Ahuja et al. \cite{ahuja2021invariance} propose the support overlap assumption (Assumption \ref{assp-inv_feature_overlap}) to the invariant features.
\begin{Assumption}[Invariant feature support overlap]\label{assp-inv_feature_overlap}
$\forall e \in \mathcal{E}_{ood}, \mathcal{Z}^e_{inv}\subseteq \cup_{e'\in\mathcal{E}_{tr}}\mathcal{Z}^{e'}_{inv}$.
\end{Assumption}
However, Assumption \ref{assp-inv_feature_overlap} is strong,
and we would show that it is still possible to solve OOD generalization without this assumption. For better illustration, consider a OOD generalization task from $\mathbb{P}(X^{e_1}, Y^{e_2})$ to $\mathbb{P}(X^{e_2}, Y^{e_2})$ with $\mathcal{E}_{tr} = \{e_1\}$ and $\mathcal{E}_{ood}=\{e_2\}$, and the support sets of the corresponding invariant features $Z^{e_1}_{inv}$ and $Z^{e_2}_{inv}$ are intuitively illustrated in Figure \ref{fig-example-supprtset} (assume $dim(Z_{inv})=2$ in this example). From the Figure \ref{fig-example-supprtset}, it is clear that although the support sets of invariant features between the two environments are different, it is still possible to solve OOD generalization if the learned feature extractor $\Phi$ only captures the invariant features, e.g., $\Phi(X)=Z_{inv}$.

To make Assumption \ref{assp-inv_feature_overlap} weaker, we propose the following assumption.
\begin{Assumption}\label{assp-new-assp} Let $\mathbb{P}(Z_{inv}^{tr}, Y^{tr}) = \frac{1}{|\mathcal{E}_{tr}|}\sum_{e\in\mathcal{E}_{tr}}\mathbb{P}(Z^e_{inv}, Y^e)$ be the mixture distribution of invariant features in the training environments. Denote $\mathcal{A}$ be a hypothesis set including all linear classifiers mapping from $\mathbb{R}^m$ to $\mathcal{Y}$. $\forall e\in \mathcal{E}_{ood}$, assume $F_l(\mathbb{P}(Z_{inv}^{tr}, Y^{tr}))\subseteq F_l(\mathbb{P}(Z^e_{inv}, Y^e))$, where $l$ is the 0-1 loss function and $F_l(\mathbb{P}(Z, Y)) = \arg \min_{f\in \mathcal{A}} \mathbb{E}_{Z, Y} [l(f(Z),Y)]$.
\end{Assumption}
Clearly, under the assumption of separable invariant features (Assumption \ref{assp-sep_inv_feature}), for any $e\in\mathcal{E}_{ood}$, Assumption \ref{assp-inv_feature_overlap} holds $\Rightarrow$ $\mathcal{Z}^e_{inv} \subseteq \mathcal{Z}_{inv}^{tr}$ $\Rightarrow$ $F_l(\mathbb{P}(Z_{inv}^{tr}, Y^{tr}))\subseteq F_l(\mathbb{P}(Z^e_{inv}, Y^e))$ $\Rightarrow$ Assumption \ref{assp-new-assp} holds, but not vice versa. Therefore, Assumption \ref{assp-new-assp} is weaker than Assumption \ref{assp-inv_feature_overlap}.
We would show that Assumption \ref{assp-new-assp} could be substituted for Assumption \ref{assp-inv_feature_overlap} for the success of OOD generalization in our proposed method in Section \ref{section-method}.



\section{Failures of IRM $\&$ IB-IRM}\label{section-failures}
Under the Spurious correlation 1 (Assumption \ref{assp-mode-1}), IB-IRM algorithm has been shown to enable OOD generalization, while IRM fails \cite{ahuja2021invariance}. In this section, we would show that both IRM and IB-IRM could fail under the Spurious correlations 2 and 3 (Assumptions \ref{assp-mode-2} and \ref{assp-mode-3}).
\subsection{Failure under the Spurious correlation 2}
\begin{Example}[Counter-Example 1]\label{example-1}
Under Assumption \ref{assp-mode-2}, let $Z^e_{inv} \leftarrow Z^e_{spu} + W^e$ with $dim(Z^e_{inv})=dim(Z^e_{spu})=dim(W^e)=1$ and $w^*_{inv}=1$ be the generated classifier in Assumption \ref{assp-linearSEM}. We assume two training environments and a OOD environment being as:
\begin{gather*}
\mathcal{E}_{tr} = \{e_1, e_2\};\quad \mathcal{E}_{ood} = \{e_3\};\\
e_1: \mathbb{P}(Z^{e_1}_{spu} = -2)=1, \mathbb{P}(W^{e_1}=-1)=0.5, \mathbb{P}(W^{e_1}=1)=0.5; \\
e_2: \mathbb{P}(Z^{e_2}_{spu} = 2)=1, \mathbb{P}(W^{e_2}=-1)=0.5, \mathbb{P}(W^{e_2}=1)=0.5; \\
e_3: \mathbb{P}(Z^{e_3}_{spu} = 1)=1, \mathbb{P}(W^{e_3}=-2)=0.5, \mathbb{P}(W^{e_3}=2)=0.5.
\end{gather*}
\end{Example}
Figure \ref{fig-example1} shows the support points of these features in the training environments.
Then, by applying any algorithm to solve the above example with $r^{th}=q$, we would get a predictor of $f^*=w^*\circ\Phi^*$. Consider the prediction made by this model as (we ignore the classifier bias for convenience)
\begin{equation}
f^*(X^e) = f^* (S(Z^e_{inv}, Z^e_{spu})) = \bm{1}(\Phi_{inv}^* Z^e_{inv} + \Phi_{spu}^* Z^e_{spu}).
\end{equation}
It is trivial to show that the $f^*$ of $\Phi_{inv}^*=0$ and $\Phi_{spu}^*=1$ is an invariant predictor across training environments with classification error $R^{e_1} = R^{e_2} = q$, and it achieves the least entropy of $h^e(\Phi^*)=0$ for each training environment $e$, and therefore, it is a solution of IB-IRM and IRM. However, the predictor of $f^*$ relies on spurious features and has the test error $R^{e_3} = 0.5$, thus fails to solve the OOD generalization.

\subsection{Failure under the Spurious correlation 3}
\begin{Example}[Counter-Example 2]\label{example-2}
Under Assumption \ref{assp-mode-3}, let $Z^e_{spu} \leftarrow W_1^e Y^e + W_0^e (1-Y^e)$ with $dim(Z_{inv})=dim(Z_{spu})=dim(W_0^e) = dim(W_1^e)=1$, $Z_{inv}^e$ be a discrete variable supported uniformly on six points $\{-4, -3, -2, 2, 3,4\}$ among all environments, and $w^*_{inv}=1$ be the generated classifier in Assumption \ref{assp-linearSEM}. We assume two training environments and a OOD environment being as:
\begin{gather*}
\mathcal{E}_{tr} = \{e_1, e_2\}; \quad \mathcal{E}_{ood} = \{e_3\}\\
e_1: \mathbb{P}(W^{e_1}_0=-1)=1, \mathbb{P}(W^{e_1}_1=1)=1; \\
e_2: \mathbb{P}(W^{e_2}_0=-0.5)=1,\mathbb{P}(W^{e_2}_1=0.5)=1; \\
e_3: \mathbb{P}(W^{e_3}_0=1)=1, \mathbb{P}(W^{e_3}_1=-1)=1;
\end{gather*}
\end{Example}
Figure \ref{fig-example2} shows the support points of these features in the training environments.
Then, by applying any algorithm to solve the above example with $r^{th}=q$, we would get a predictor of $f^*=w^*\circ\Phi^*$. Consider the prediction made by this model as (we ignore the classifier bias for convenience)
\begin{equation}
f^*(X^e) = f^* (S(Z^e_{inv}, Z^e_{spu})) = \bm{1}(\Phi_{inv}^* Z^e_{inv} + \Phi_{spu}^* Z^e_{spu}).
\end{equation}
It is trivial to show that the $f^*$ of $\Phi_{inv}^*=0$ and $\Phi_{spu}^*=1$ is an invariant predictor across training environments with classification error $R^{e_1} = R^{e_2} = 0$, and it achieves the least entropy of $h^e(\Phi^*)=1$ among all highly predictive predictors for each training environment $e$, and therefore, it is a solution of IB-IRM and IRM. However, the predictor of $f^*$ relies on spurious features and has the test error $R^{e_3} = 1$, thus fails to solve the OOD generalization.

\subsection{Understanding the failures}
From the illustrations of above simple examples, we can conclude that the failure of invariance constraint for removing the spurious features out is because the spurious features among all training environments are strictly linearly separable by their corresponding labels. This would make the predictor relying only on spurious features to achieve minimum training error and also be the invariant predictor across training environments. Since the label set is finite (with only two values in binary classification) in classification problems, such phenomenon may exist. We state such failure mode formally as below.
\begin{Theorem}\label{theorem-analysis2}
Given any $\mathcal{E}_{tr}\subset \mathcal{E}_{all}$ and $\mathcal{E}_{ood}\subseteq \mathcal{E}_{all}$ satisfying Assumptions \ref{assp-bound_inv_feature}, \ref{assp-sep_inv_feature}, and \ref{assp-inv_feature_overlap}, if two sets $\cup_{e\in\mathcal{E}_{tr}}\mathcal{Z}^e_{spu}(Y^e=1)$ and $\cup_{e\in \mathcal{E}_{tr}}\mathcal{Z}^e_{spu}(Y^e=0)$ are linearly separable and $H(Z^e_{inv})>H(Z^e_{spu})$ on each training environment $e$, then IB-IRM (and IRM, ERM, or IB-ERM) with any $r^{th}\in \mathbb{R}$ fails to solve the OOD generalization.
\end{Theorem}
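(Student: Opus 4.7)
The strategy is to exhibit a single linear predictor $f^{*}=w^{*}\circ\Phi^{*}$ that is simultaneously a valid returned solution of ERM, IRM, IB-ERM, and IB-IRM on $\mathcal{E}_{tr}$, yet whose worst-case OOD risk strictly exceeds that of the invariant predictor $f_{inv}(X)=\bm{1}(w^{*}_{inv}\cdot Z_{inv})$, which attains the Bayes rate $q$ uniformly on $\mathcal{E}_{all}$. The construction is forced by the two extra hypotheses: use the linear separability of $\bigcup_{e\in\mathcal{E}_{tr}}\mathcal{Z}^{e}_{spu}(Y^{e}=1)$ from $\bigcup_{e\in\mathcal{E}_{tr}}\mathcal{Z}^{e}_{spu}(Y^{e}=0)$ to pick a separating hyperplane $(\tilde w,\tilde b)\in\mathbb{R}^{o+1}$; since $S$ is invertible by Assumption \ref{assp-linearSEM}, take $\Phi^{*}$ to be the linear map that extracts $Z_{spu}$ from $X$, so $\Phi^{*}(X^{e})=Z^{e}_{spu}$, and set $w^{*}=(\tilde w,\tilde b)$. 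On each $e\in\mathcal{E}_{tr}$ the prediction then matches the noise-free label $\bm{1}(w^{*}_{inv}\cdot Z^{e}_{inv})$, giving $R^{e}(f^{*})=q$.

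Next I verify that $f^{*}$ lies in the returned set of each program. For ERM and IB-ERM at any $r^{th}\geq q$, the value $q$ is the Bayes floor, so $f^{*}$ sits in the feasible minimum-risk set; for $r^{th}<q$ the feasible set is empty and the algorithm vacuously fails. For IRM, invariance requires $w^{*}$ to minimize $R^{e}(\tilde w\circ\Phi^{*})$ per environment, which holds because $f^{*}$ already hits the Bayes rate on each $e\in\mathcal{E}_{tr}$. For IB-IRM (and IB-ERM), $H(\Phi^{*}(X^{e}))=H(Z^{e}_{spu})$, and I claim that any feasible linear $\Phi$ routing risk through the invariant features has entropy at least $H(Z^{e}_{inv})$, which by hypothesis is strictly larger than $H(Z^{e}_{spu})$; so $\Phi^{*}$ is weakly minimum-entropy and belongs to the IB minimisers.

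Then I construct an OOD environment that breaks $f^{*}$. Valid interventions (Definition \ref{def-environments}) can arbitrarily redistribute $Z^{e}_{spu}$ given $Y^{e}$ while preserving $\mathbb{P}(Y\mid Z_{inv})$ and $\mathbb{P}(X\mid Z_{inv},Z_{spu})$. Pick some $e_{0}\in\mathcal{E}_{tr}$ and define $e^{\star}\in\mathcal{E}_{all}$ by setting $\mathbb{P}(Z^{e^{\star}}_{inv})=\mathbb{P}(Z^{e_{0}}_{inv})$ (which preserves Assumption \ref{assp-inv_feature_overlap} when $e^{\star}$ is adjoined to $\mathcal{E}_{ood}$) and by swapping the conditional supports of $Z^{e^{\star}}_{spu}$ on $\{Y^{e^{\star}}=0\}$ and $\{Y^{e^{\star}}=1\}$ relative to the training mix. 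Then $f^{*}$ outputs the complement of the noise-free label on $e^{\star}$, so $R^{e^{\star}}(f^{*})=1-q$. Because $f_{inv}\in\mathcal{F}$ achieves risk $q$ everywhere, $\min_{f\in\mathcal{F}}\max_{e\in\mathcal{E}_{ood}}R^{e}(f)\leq q<1-q\leq\max_{e\in\mathcal{E}_{ood}}R^{e}(f^{*})$, contradicting the success condition.

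The main obstacle is the entropy comparison in the second step: the parameterisation permits a linear $\Phi$ whose image is a low-dimensional summary of $Z_{inv}$ (say $w^{*}_{inv}\cdot Z_{inv}$), whose differential entropy could in principle dip below $H(Z^{e}_{spu})$. To close this I plan to use Assumption \ref{assp-sep_inv_feature} together with linearity, arguing that any $\Phi$ whose image lowers the entropy beneath $H(Z^{e}_{inv})$ while still preserving the sign of $w^{*}_{inv}\cdot Z_{inv}$ almost surely must either re-introduce a dependence on $Z_{spu}$ (yielding a spurious solution of the same flavour as $\Phi^{*}$) or contradict the strict separability hypothesis; either way the minimum-entropy invariant branch is pinned at $H(Z^{e}_{inv})$, and the strict inequality $H(Z^{e}_{inv})>H(Z^{e}_{spu})$ seals the IB preference for $\Phi^{*}$.
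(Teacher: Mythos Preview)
Your overall strategy coincides with the paper's: exhibit a spurious-only predictor $f^{*}=w^{*}\circ\Phi^{*}$ with $\Phi^{*}(X)=Z_{spu}$ that each of ERM, IRM, IB-ERM, IB-IRM can legitimately return, then observe that valid interventions on $Z_{spu}$ destroy it on some $e^{\star}\in\mathcal{E}_{all}$ while the invariant predictor stays at risk $q$. The paper's proof is a three-line sketch doing exactly this.

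One concrete error: the hypothesis separates $\cup_{e}\mathcal{Z}^{e}_{spu}(Y^{e}=1)$ from $\cup_{e}\mathcal{Z}^{e}_{spu}(Y^{e}=0)$ by the \emph{noisy} label $Y^{e}$, not by the noise-free sign $\bm{1}(w^{*}_{inv}\cdot Z_{inv})$. Hence $f^{*}$ perfectly reproduces $Y^{e}$ on every training environment and $R^{e}(f^{*})=0$, not $q$ (cf.\ Example \ref{example-2}, where the paper states $R^{e_{1}}=R^{e_{2}}=0$). This actually simplifies your feasibility and invariance checks---zero error is trivially ERM-optimal, per-environment optimal, and feasible for every $r^{th}\geq 0$---and in your swapped environment $e^{\star}$ you get $R^{e^{\star}}(f^{*})=1$, not $1-q$. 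The inequality $1>q$ still closes the comparison with $f_{inv}$.

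On the entropy obstacle in your last paragraph: you are right to flag it, and your proposed resolution does not work. The map $\Phi(X)=w^{*}_{inv}\cdot Z_{inv}$ is a perfectly valid linear feature extractor that uses no spurious information, respects Assumption \ref{assp-sep_inv_feature}, and whose one-dimensional entropy can in principle sit below $H(Z^{e}_{spu})$; nothing in the hypotheses rules this out. The paper's proof simply asserts that $H(Z^{e}_{inv})>H(Z^{e}_{spu})$ ``would make IB-IRM prefer'' the spurious feature and does not engage with projections at all, so this is a looseness in the theorem statement that both you and the paper inherit rather than a defect unique to your argument. If you want to match the paper's level of rigor, you can stop at the assertion; if you want to exceed it, the cleanest patch is to note that for any $r^{th}\in[0,q)$ the invariant-only branch is infeasible (its risk is exactly $q$), so IB-IRM is forced onto $Z_{spu}$ regardless of the entropy comparison---this already covers a nontrivial range of $r^{th}$ without appealing to $H(Z^{e}_{inv})>H(Z^{e}_{spu})$.
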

The understanding of Theorem \ref{theorem-analysis2} is intuitive since when the spurious features in the training environments with respect to different labels are linearly separable, there is no algorithm that can distinguish spurious features from invariant features. Although the assumption of linear separation of the spurious features seems strong for this failure, it is easy to hold in high-dimensional space when $dim(Z_{spu})$ is large (common cases in practice such as image data). We have show one case in Appendix \ref{appendix-experiments} that if the number of environments $|\mathcal{E}_{tr}| < dim(Z_{spu}) / 2$ under the Assumption \ref{assp-mode-3}, the spurious features in the training environments are probably separable by their labels. This is because, in $o$-dimensional space, there is a high probability that $o$ randomly drawn distinct points are linearly separable for any two subsets.

\section{Counterfactual supervision-based information bottleneck}\label{section-method}
\begin{figure}
    \centering
    \includegraphics[width=1.0\textwidth]{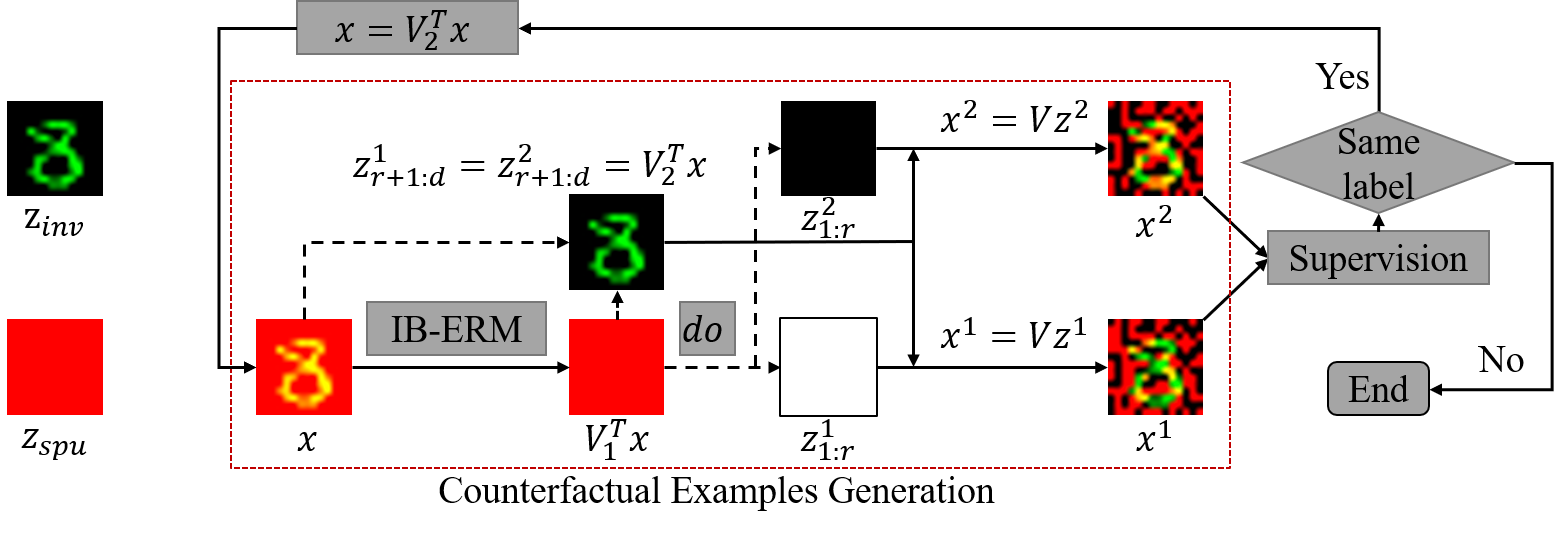}
    \caption{A simplified framework for the illustration of the proposed CSIB method.}
    \label{fig-framework}
\end{figure}
In the above analyses, we have shown two failure modes of IB-IRM and IRM for OOD generalization in the linear classification problem. The key reason for the failure is due to the learned features $\Phi(X)$ that rely on spurious features. To prevent such failure, we present counterfactual supervision-based information bottleneck (CSIB) learning algorithm for removing the spurious features progressively.

In general, IB-ERM method is applied to extract features from the begin of each iteration:
\begin{equation}\label{equ-IBERM}
 \quad \min_{w, \Phi} \sum_{e\in\mathcal{E}_{tr}} h^e(\Phi) \quad \text{s.t.} \ \frac{1}{|\mathcal{E}_{tr}|}\sum_{e\in \mathcal{E}_{tr}} R^e(w\circ\Phi) \leq r^{th}
\end{equation}
Due to the information bottleneck, only a part of information of the input $X$ are exploited in $\Phi(X)$. If the information of spurious features $Z_{spu}$ exists in the learned features $\Phi(X)$, the idea of CSIB is going to drop such information and meanwhile maintain the causal information (represented by invariant features $Z_{inv}$) as well. However, achieving such goal faces two challenges: (1) How to determine whether $\Phi(X)$ contains spurious information of $Z_{spu}$? and (2) How to remove the information of $Z_{spu}$?

Fortunately, due to the orthogonality in the linear space, it is possible to disentangle the features that are exploited by $\Phi(X)$ (denote as $X_1$) and the features that are not exploited by $\Phi(X)$ (denote as $X_2$) via Singular Value Decomposition (SVD). Base on that, we could construct a SEM $\mathcal{C}_{new}$ governing three variables of $X_1$, $X_2$, and $X$. Therefore, by doing counterfactual interventions on $X_1$ and $X_2$ in $\mathcal{C}_{new}$, we could solve the first challenge by requiring a single supervision on the counterfactual examples $X'$. For example, if we intervene on $X_1$ and find that the causal information remains in the resulting $X'$, then the extracted features $\Phi(X)$ are definitely the spurious features. To address the second challenge, we replace the input by $X_2$ by filtering out the information of $X_1$, and do the same learning procedure from the beginning.

The learning algorithm of CSIB is illustrated in Algorithm \ref{alg-CSIB}, and Figure \ref{fig-framework} shows the framework of CSIB. We show in Theorem \ref{theorem-CSIB} that CSIB is theoretically guaranteed to succeed to solve OOD generalization.

\begin{algorithm}
    \caption{Counterfactual Supervision-based Information Bottleneck (CSIB)}\label{alg-CSIB}
    \textbf{Input:} $\mathbb{P}(X^e, Y^e)$, $e\in \mathcal{E}_{tr}$, $r^{th} > 0$, $c\geq dim(Z_{inv})$, $M \gg 0$, and $(x, y)$ is an example randomly drawn from $\mathbb{P}(X^e, Y^e)$. \\
    \textbf{Output:} classifier $w\in\mathbb{R}^{c+1}$, feature extractor $\Phi=\mathbb{R}^{c\times d}$. \\
    \textbf{Begin:}
    \begin{algorithmic}[1]
    \State $Lv\gets \text{[]}$; $ Lr\gets \text{[]}$; $\Phi' \gets \mathbb{I}^{d\times d}$
    \State $d' \gets dim(X^e)$
    \State Apply IB-ERM method (Equation \ref{equ-IBERM}) to $\mathbb{P}(X^e, Y^e)$ and get $w^*\in\mathbb{R}^{c+1}$ and $\Phi^*\in\mathbb{R}^{c\times d'}$
    \State Apply SVD to $\Phi^*$ as $\Phi^* = U\Lambda V^T = [U_1, U_2][\Lambda_1, \bm{0}; \bm{0}, \bm{0}] [V_1^T; V_2^T]$
    \State $r \gets rank(\Phi^*)$
    \State $z^1_{1:r} \gets [-M, .., -M]$; $z^1_{r+1:d'} \gets V_2^T \Phi' x$
    \State $z^2_{1:r} \gets [M, .., M]$; $z^2_{r+1:d'} \gets V_2^T \Phi' x$
    \State $x^1 \gets V z^1$; $x^2 \gets V z^2$
    \If{$Lv$ is not empty}
        \State $z_{old}\gets \text{[]}$; $i\gets 0$; $x'\gets x$
        \While{$i<len(Lv)$}
        \State $z \gets Lv[i] x'$
        \State $z_{old}$.append($z$)
        \State $x' \gets z_{Lr[i]:}$
        \State $i\gets i+1$
        \EndWhile
        \State $i\gets 0$
        \While{$i<len(Lv)$}
        \State $j\gets len(Lv) - i$
        \State $z^1 \gets z_{old}[j]$; $z^2 \gets z_{old}[j]$
        \State $z^1_{Lr[j]:} \gets x^1$; $z^2_{Lr[j]:} \gets x^2$
        \State $x^1 \gets Lv[j]^T z^1$; $x^2 \gets Lv[j]^T z^2$
        \State $i\gets i+1$
        \EndWhile
    \EndIf
    \If{label($x^1$) $=$ label($x^2$)}
        \State $Lr$.append($r$); $Lv$.append($V^T$)
        \State $X^e\gets V_2^T X^e$; $\Phi' \gets V_2^T \Phi'$
        \State Goto Step 2
    \EndIf
    \State $w\gets w^*$; $\Phi \gets \Phi^*$
    \end{algorithmic}
    \textbf{End}
\end{algorithm}

\begin{Theorem}[Guarantee of CSIB]\label{theorem-CSIB}
Given any $\mathcal{E}_{tr}\subset \mathcal{E}_{all}$ and $\mathcal{E}_{ood}\subseteq \mathcal{E}_{all}$ satisfying Assumptions \ref{assp-bound_inv_feature}, \ref{assp-sep_inv_feature}, and \ref{assp-new-assp}, then for every Spurious correlations of Assumptions \ref{assp-mode-1}, \ref{assp-mode-2}\footnote{In this correlation mode, assume the spurious features are linearly separable in the training environments}, and \ref{assp-mode-3}, CSIB algorithm with $r^{th} = q$ succeeds to solve the OOD generalization.
\end{Theorem}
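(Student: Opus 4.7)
The plan is to analyze one iteration of Algorithm \ref{alg-CSIB} first, show the counterfactual test is sound, then verify the recursion terminates with an extractor that uses only invariant features, and finally invoke Assumption \ref{assp-new-assp} to pass from training to OOD environments.

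First I would characterize what IB-ERM returns. Since $c\geq dim(Z_{inv})$ and $r^{th}=q$, any minimizer $(w^*,\Phi^*)$ must achieve training risk $\leq q$, which under Assumption \ref{assp-sep_inv_feature} (and the appropriate separability hypothesis in each spurious mode) forces $\Phi^*(X^e)$ to be predictive of $Y^e$ up to the Bayes noise. Among such $\Phi^*$ the one minimizing $\sum_e H(\Phi^*(X^e))$ will pick whichever linear subspace of $X^e = S(Z^e_{inv},Z^e_{spu})$ has the least entropy while still supporting a predictor of error $q$. In mode \ref{assp-mode-1} this forces $\Phi^*$ to read off $Z_{inv}$ (recovering the IB-IRM guarantee of \cite{ahuja2021invariance}); in modes \ref{assp-mode-2} and \ref{assp-mode-3}, however, $\Phi^*$ may instead read off $Z_{spu}$, producing exactly the failure modes documented in Examples \ref{example-1} and \ref{example-2}. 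The key observation is that in all three modes $\Phi^*$ produces a \emph{linear} functional of $S^{-1}X$, so its row space either lies entirely in the $Z_{inv}$ block, entirely in the $Z_{spu}$ block, or is a genuine mix of the two.

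Next I would analyze the SVD-based split. Writing $\Phi^* = U\Lambda V^T$ with $V=[V_1,V_2]$, the coordinates $V_1^T x$ are exactly those exploited by $\Phi^*$ and $V_2^T x$ are orthogonal unused directions. The counterfactual pair $x^1,x^2$ is built by fixing the unused coordinates at the true $V_2^Tx$ and setting the used coordinates to $\pm M$. The crucial claim, which I would prove by direct computation using the invertibility of $S$, is: the labeller $\bm{1}(w^*_{inv}\cdot Z_{inv})$ evaluated at $x^1$ and $x^2$ agrees if and only if $V_1$ lies in the span of the spurious block of $S^{-T}$ (equivalently, $\Phi^*$ uses only spurious features). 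Indeed, taking $M$ large drives the component of $Z_{inv}$ along any invariant direction contained in $V_1$ to $\pm \infty$, forcing the two labels to disagree; conversely if $V_1$ is purely spurious, then $V_2^Tx$ already determines $Z_{inv}$ and so both labels coincide with the true label of $x$. This is the main obstacle: one has to rule out degenerate cancellations between the invariant and spurious directions inside $V_1$, which I would handle by choosing $M$ larger than the bound in Assumption \ref{assp-bound_inv_feature} combined with the operator norm of $S^{-1}$, so the invariant contribution dominates.

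Given the soundness of the counterfactual test, the outer loop of Algorithm \ref{alg-CSIB} is a descent on the dimension of the residual input: whenever the labels agree we discard the detected spurious directions by replacing $X^e$ by $V_2^T X^e$ and recording the transformation in $Lv,Lr$. Since $dim(X^e)$ is finite and strictly decreases at each accepted iteration, the loop terminates; at termination the counterfactual labels disagree, so by the claim above the final $\Phi^*$ uses some invariant direction. Because IB-ERM with $r^{th}=q$ and $c\geq dim(Z_{inv})$ allows $\Phi^*$ to carry the full $Z_{inv}$, combined with entropy minimization on the residual (which no longer contains any non-trivial spurious subspace that is both predictive and lower-entropy than $Z_{inv}$), the returned $\Phi^*$ is a linear bijection onto the span of $Z_{inv}$ (modulo label-preserving transformations).

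Finally, I would close the proof by invoking Assumption \ref{assp-new-assp}. The composite predictor $w^*\circ\Phi^*$ restricted to the invariant coordinates is, by construction, a minimizer of the mixed-training risk $\mathbb{E}_{\mathbb{P}(Z^{tr}_{inv},Y^{tr})}[l(\cdot)]$ over linear classifiers, hence lies in $F_l(\mathbb{P}(Z^{tr}_{inv},Y^{tr}))$. Assumption \ref{assp-new-assp} places this set inside $F_l(\mathbb{P}(Z^e_{inv},Y^e))$ for every $e\in\mathcal{E}_{ood}$, so $w^*\circ\Phi^*$ also minimizes the OOD risk on each test environment, giving $\max_{e\in\mathcal{E}_{ood}} R^e(w^*\circ\Phi^*)=\min_f \max_{e\in\mathcal{E}_{ood}} R^e(f)$. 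Combining the three steps—IB-ERM characterization, soundness of the counterfactual test with terminating recursion, and Assumption \ref{assp-new-assp}—yields the theorem.
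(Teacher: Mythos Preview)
Your overall skeleton matches the paper's, but there is a real gap at the hinge between steps one and two. You allow the IB-ERM minimizer $\Phi^*$ to be ``a genuine mix'' of the $Z_{inv}$ and $Z_{spu}$ blocks, and then rely on the counterfactual test plus a vague entropy argument at termination to rule this out. That does not work: by your own soundness claim, the counterfactual labels \emph{disagree} as soon as $V_1$ contains \emph{any} invariant direction, so a mixed $\Phi^*$ would pass the test and the algorithm would halt right there, returning a predictor that still depends on $Z_{spu}$. Nothing in your termination paragraph excludes this; the sentence ``entropy minimization on the residual (which no longer contains any non-trivial spurious subspace that is both predictive and lower-entropy than $Z_{inv}$)'' is an assertion, not an argument, and in fact it is exactly the statement that needs proof.

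The paper closes this gap \emph{before} touching the SVD: it shows, separately for each of Assumptions \ref{assp-mode-1}, \ref{assp-mode-2}, \ref{assp-mode-3}, that any $\Phi^*$ with $\Phi_{inv}\neq\bm{0}$ and $\Phi_{spu}\neq\bm{0}$ is strictly suboptimal for IB-ERM. The mechanism is the strict inequality $H(X+Y)>\max\{H(X),H(Y)\}$ for independent bounded (or finitely supported) variables: in each mode one rewrites $w^*\cdot\Phi^* X^e$ as a sum of an independent noise term and a term that already achieves risk $\le q$, so dropping the noise keeps feasibility while strictly lowering entropy. Once purity is established, the counterfactual test only has to distinguish ``purely invariant'' from ``purely spurious'', which is what your SVD computation (and the paper's) actually verifies. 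You should insert this no-mixing argument as the first substantive step; without it the rest of the plan does not go through.
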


\begin{remark}
CSIB succeeds to solve OOD generalization without assuming the support overlap to invariant features and could apply to multiple spurious modes where IB-IRM (as well as ERM, IRM, and IB-ERM) may fail. By introducing counterfactual inference and further supervision (usually done by human) with several steps, CSIB works even when accessing data from a single environment, which is significant especially in the cases where multiple environments data are not available.
\end{remark}

\section{Experiments}\label{section-experiments}
\subsection{Toy experiments on synthetic datasets}
We begin perform experiments on three synthetic datasets from different spurious correlations modes to verify our method -- counterfactual supervision-based information bottleneck (CSIB) -- and compare it to ERM, IB-ERM, IRM, and IB-IRM.
We follow the same protocol for tuning hyperparameters from \cite{arjovsky2019invariant, aubin2021linear, ahuja2021invariance} and report the classification error for all experiments. In the following, we first briefly describe the designed datasets and then report the main results. More experimental details can be found in Appendix.

\begin{table}
\caption{Summary of three synthetic datasets. Note that for linearly separable features, their margin levels significantly influence the final learning classifier due to the implicit bias of the gradient descent \cite{soudry2018implicit}. Such bias would push the standard learning (like cross-entropy loss) focusing more on the large-margin features. The margin with respect to a dataset (or features) $\mathcal{Z}$ (each instance has a label 0 or 1) is the minimum distance between a point in $\mathcal{Z}$ and the max-margin hyperplane, which separates $\mathcal{Z}$ by their labels.}\label{table-summary-of-datasets}
\centering
\adjustbox{max width=0.9\textwidth}{%
\begin{tabularx}{\textwidth}{ccccc}
\toprule
    Datasets   &  Margin relationship & Entropy relationship & $\text{Dim}_{inv}$  & $\text{Dim}_{spu}$   \\
\midrule
    Example 1/1S &  $\text{Margin}_{inv} \ll \text{Margin}_{spu}$  & $\text{Entropy}_{inv} < \text{Entropy}_{spu}$ & 5 & 5  \\
    Example 2/2S &  $\text{Margin}_{inv} \approx \text{Margin}_{spu}$  & $\text{Entropy}_{inv} > \text{Entropy}_{spu}$ & 5 & 5 \\
    Example 3/3S &  $\text{Margin}_{inv} \gg \text{Margin}_{spu}$  & $\text{Entropy}_{inv} > \text{Entropy}_{spu}$ & 5 & 5 \\
\bottomrule
\end{tabularx}
}
\end{table}

\subsubsection{Datasets}
\textbf{Example 1/1S.} The example is a modified one from the linear unit tests introduced in \cite{aubin2021linear}, which generalizes the cow/camel classification task with relevant backgrounds.
\begin{align*}
& \theta_{cow} = \bm{1}_m, \quad \theta_{camel} = -\theta_{cow}, \quad \nu_{animal} = 10^{-2} \\
& \theta_{grass} = \bm{1}_o,\quad  \theta_{sand} = -\theta_{grass}, \quad \nu_{background} = 1.
\end{align*}
The dataset $D_e$ of each environment $e\in\mathcal{E}_{tr}$ is sampled from the following distribution
\begin{gather*}
 U^e \sim \text{Categorical}(p^es^e, (1-p^e)s^e, p^e(1-s^e), (1-p^e)(1-s^e)), \\
 Z^e_{inv} \sim \left\{
\begin{aligned}
& (\mathcal{N}_m(0, 0.1) + \theta_{cow})\nu_{animal} &\quad \text{if}\ U^e\in\{1,2\}, \\
& (\mathcal{N}_m(0, 0.1) + \theta_{camel})\nu_{animal}& \quad \text{if}\ U^e\in\{3,4\},
\end{aligned}
\right. \\
 Z^e_{spu} \sim \left\{
\begin{aligned}
& (\mathcal{N}_o(0, 0.1) + \theta_{grass})\nu_{background} &\quad \text{if}\ U^e\in\{1,4\}, \\
& (\mathcal{N}_o(0, 0.1) + \theta_{sand})\nu_{background}& \quad \text{if}\ U^e\in\{2,3\},
\end{aligned}
\right. \\
 Z^e \leftarrow (Z^e_{inv}, Z^e_{spu}), \quad X^e \leftarrow S(Z^e), \quad N \sim Bernoulli(q), \ q< 0.5, \quad Y^e \leftarrow \bm{1}(\bm{1}^T_m Z^e_{inv}) \oplus N
\end{gather*}
We set $s^{e_0}=0.5, s^{e_1}=0.7, s^{e_2}=0.3$ for the first three environments, and $s^{e_j}\sim \text{Uniform}(0.3,0.7)$ for $j>3$. The scrambling matrix $S$ is an identical matrix in Example 1 and a random unitary matrix in Example 1S. Here, we set $p^e=1$ and $q=0$ for all environments to make the spurious features and the invariant features both linearly separable to confuse each other. For the experiments on different values of $q$ and $p^e$ are presented in Appendix, where we have found very interesting observations related to the inductive bias of neural networks.

\textbf{Example 2/2S.} This example is extended from the Example \ref{example-1} to show one of the failure modes of IB-IRM (as well as ERM, IRM, and IB-ERM) and how our method can be improved by intervention (counterfactual supervision). Given $w^e \in \mathbb{R}$, each instance in the environment data $D^e$ is sampled by
\begin{gather*}
\theta_{spu} = 5\cdot\bm{1}_o, \quad \theta_{w} = w^e\cdot\bm{1}_m, \quad \nu_{spu} = 10^{-2}, \quad \nu_{w} = 1, \quad p, q \sim \text{Bernoulli(0.5)}, \\ \quad Z^e_{spu} = \mathcal{N}_o(0,1)\nu_{spu} + (2p-1)\cdot\theta_{spu}, \quad W^e = \mathcal{N}_m(0,1)\nu_{w} + (2q-1)\cdot\theta_{w} \\
Z^e_{inv} = AZ^e_{spu} + W^e, \quad Z^e \leftarrow (Z^e_{inv}, Z^e_{spu}), \quad X^e \leftarrow S(Z^e),\quad Y^e = \bm{1}(\bm{1}^T_m Z^e_{inv}),
\end{gather*}
where we set $m=o=5$ and $A\in\mathbb{R}^{m\times o}$ be the identical matrix in our experiments. We set $w^{e_0}=3$, $w^{e_1}=2$, $w^{e_2}=1$, and $w^{e_j} = \text{Uniform}(0,3)$ if $j>3$ for different training environments. This example shows clear smaller entropy of spurious features than that of invariant features, which is opposite to the Example 1/1S.

\textbf{Example 3/3S.} This example extends from the Example \ref{example-2} and similar to the construction of Example 2/2S. Let $w^e\sim \text{Uniform}(0, 1)$ for different training environments. Each instance in the environments $e$ is sampled by
\begin{gather*}
\theta_{inv} = \cdot10\cdot\bm{1}_m, \quad \nu_{inv} = 10, \quad \nu_{spu}=1, \quad p,q \sim \text{Bernoulli(0.5)}, \\
Z^e_{inv} = \mathcal{N}_m(0,1)\nu_{inv} + (2p-1)\cdot\theta_{inv}, \quad Y^e = \bm{1}(\bm{1}^T_m Z^e_{inv}), \\
Z^e_{spu} = 2(Y^e-1)\cdot\nu_{spu} + (2q-1)\cdot w^e\cdot\bm{1}_o, \quad Z^e \leftarrow (Z^e_{inv}, Z^e_{spu}), \quad X^e \leftarrow S(Z^e),
\end{gather*}
where we set $m=o=5$ in our experiments. The spurious features have smaller entropy than the invariant features in this example, which is similar to Example 2/2S, but the invariant features significantly enjoy much larger margin than the spurious features, which is very different from the above two examples. We make a summary to the properties of these three datasets in Table \ref{table-summary-of-datasets} for a general view.

\begin{table}
\caption{Main results. \#Envs means the number of training environments, and (min) reports the minimal test classification error across different running seeds.}
\begin{center}
\adjustbox{max width=\textwidth}{%
\begin{tabular}{lcccccc}
\toprule
            & \#Envs    & ERM (min)       & IRM (min)              & IB-ERM (min)          & IB-IRM (min)          & CSIB (min)                 \\
\midrule
Example 1   & 1   & 0.50 $\pm$ 0.01 (0.49) & 0.50 $\pm$ 0.01 (0.49) & \textbf{0.23 $\pm$ 0.02 (0.22)} & 0.31 $\pm$ 0.10 (0.25) & \textbf{0.23 $\pm$ 0.02 (0.22)}  \\
Example 1S  & 1   & 0.50 $\pm$ 0.00 (0.49) & 0.50 $\pm$ 0.00 (0.50) & \textbf{0.46 $\pm$ 0.04 (0.39)} & 0.30 $\pm$ 0.10 (0.25) & \textbf{0.46 $\pm$ 0.04 (0.39)}  \\
Example 2   & 1   & 0.40 $\pm$ 0.20 (0.00) & 0.50 $\pm$ 0.00 (0.49) & 0.50 $\pm$ 0.00 (0.49) & 0.46 $\pm$ 0.02 (0.45) & \textbf{0.00 $\pm$ 0.00 (0.00)}  \\
Example 2S  & 1   & 0.50 $\pm$ 0.00 (0.50) & 0.31 $\pm$ 0.23 (0.00) & 0.50 $\pm$ 0.00 (0.50) & 0.45 $\pm$ 0.01 (0.43) & \textbf{0.10 $\pm$ 0.20 (0.00)}  \\
Example 3   & 1   & 0.16 $\pm$ 0.06 (0.09) & 0.18 $\pm$ 0.03 (0.14) & 0.50 $\pm$ 0.01 (0.49) & 0.40 $\pm$ 0.20 (0.01) & \textbf{0.11 $\pm$ 0.20 (0.00)}  \\
Example 3S  & 1   & 0.17 $\pm$ 0.07 (0.10) & \textbf{0.09 $\pm$ 0.02 (0.07)} & 0.50 $\pm$ 0.00 (0.50) & 0.50 $\pm$ 0.00 (0.50) & 0.21 $\pm$ 0.24 (0.00)  \\ \hline
Example 1   & 3   & 0.45 $\pm$ 0.01 (0.45) & 0.45 $\pm$ 0.01 (0.45) & \textbf{0.22 $\pm$ 0.01 (0.21)} & 0.23 $\pm$ 0.13 (0.02) & \textbf{0.22 $\pm$ 0.01 (0.21)}  \\
Example 1S  & 3   & 0.45 $\pm$ 0.00 (0.45) & 0.45 $\pm$ 0.00 (0.45) & 0.41 $\pm$ 0.04 (0.34) & \textbf{0.27 $\pm$ 0.11 (0.11)} & 0.41 $\pm$ 0.04 (0.34)  \\
Example 2   & 3   & 0.40 $\pm$ 0.20 (0.00) & 0.50 $\pm$ 0.00 (0.50) & 0.50 $\pm$ 0.00 (0.50) & 0.33 $\pm$ 0.04 (0.25) & \textbf{0.00 $\pm$ 0.00 (0.00)}  \\
Example 2S  & 3   & 0.50 $\pm$ 0.00 (0.50) & 0.37 $\pm$ 0.15 (0.15) & 0.50 $\pm$ 0.00 (0.50) & 0.34 $\pm$ 0.01 (0.33) & \textbf{0.10 $\pm$ 0.20 (0.00)}  \\
Example 3   & 3   & 0.18 $\pm$ 0.04 (0.15) & 0.21 $\pm$ 0.02 (0.20) & 0.50 $\pm$ 0.01 (0.49) & 0.50 $\pm$ 0.01 (0.49) & \textbf{0.11 $\pm$ 0.20 (0.00)}  \\
Example 3S  & 3   & 0.18 $\pm$ 0.04 (0.15) & 0.08 $\pm$ 0.03 (0.03) & 0.50 $\pm$ 0.00 (0.50) & 0.43 $\pm$ 0.09 (0.31) & \textbf{0.01 $\pm$ 0.00 (0.00)}  \\ \hline
Example 1   & 6   & 0.46 $\pm$ 0.01 (0.44) & 0.46 $\pm$ 0.09 (0.41) & \textbf{0.22 $\pm$ 0.01 (0.20)} & 0.37 $\pm$ 0.14 (0.17) & \textbf{0.22 $\pm$ 0.01 (0.20)}  \\
Example 1S  & 6   & 0.46 $\pm$ 0.02 (0.44) & 0.46 $\pm$ 0.02 (0.44) & 0.35 $\pm$ 0.10 (0.23) & 0.42 $\pm$ 0.12 (0.28) & \textbf{0.35 $\pm$ 0.10 (0.23)}  \\
Example 2   & 6   & 0.49 $\pm$ 0.01 (0.48) & 0.50 $\pm$ 0.01 (0.48) & 0.50 $\pm$ 0.00 (0.50) & 0.30 $\pm$ 0.01 (0.28) & \textbf{0.00 $\pm$ 0.00 (0.00)}  \\
Example 2S  & 6   & 0.50 $\pm$ 0.00 (0.50) & 0.35 $\pm$ 0.12 (0.25) & 0.50 $\pm$ 0.00 (0.50) & 0.30 $\pm$ 0.01 (0.29) & \textbf{0.20 $\pm$ 0.24 (0.00)}  \\
Example 3   & 6   & 0.18 $\pm$ 0.04 (0.15) & 0.20 $\pm$ 0.01 (0.19) & 0.50 $\pm$ 0.00 (0.49) & 0.37 $\pm$ 0.16 (0.16) & \textbf{0.01 $\pm$ 0.01 (0.00)}  \\
Example 3S  & 6   & 0.18 $\pm$ 0.04 (0.14) & \textbf{0.05 $\pm$ 0.04 (0.01)} & 0.50 $\pm$ 0.00 (0.50) & 0.50 $\pm$ 0.00 (0.50) & 0.11 $\pm$ 0.20 (0.00)  \\
\bottomrule
\end{tabular}
}
\end{center}
\label{table-main_results}
\end{table}

\subsubsection{Summary of results}\label{section-summaryofresults}
Table \ref{table-main_results} shows the classification errors of different methods when training data comes from single, three, and six environments. We can see that ERM and IRM fail to recognize the invariant features in the experiment of Example 1/1S, where invariant features have smaller margin than spurious features do, while information bottleneck-based methods (IB-ERM, IB-IRM, and CSIB) show improved results due to the smaller entropy of the invariant features. Our method CSIB shows consistent results with IB-IRM in Example 1/1S when invariant features are extracted in the first run, which verifies the effectiveness of using information bottleneck for OOD generalization. In another dataset of Example 2/2S, where the invariant features have larger entropy than spurious features do, we can see that only CSIB can remove the spurious features out among all comparing methods, although information bottleneck-based method IB-ERM would degrade the performance of ERM by focusing more on the spurious features. In the third experiments of Example 3/3S, we can see that although ERM shows not-bad results due to the significantly larger margin of invariant features, our method CSIB still shows improvements by removing out more spurious features. Notably, comparing to the IB-ERM and IB-IRM when only spurious features are extracted (Example 2/2S, Example 3/3S), our method CSIB could effectively remove them by counterfactual supervision and then refocus on the invariant features. Note that the reason of non-zero average error and the fluctuant results of CSIB in some experiments is because the entropy minimization in the training process is less accurate, where entropy is substituted by variance for the ease of the optimization. Nevertheless, there always exists a case where the entropy is indeed truly minimized and the error reaches zero (see (min) in the table) in Example 2/2S and Example 3/3S. In summary, CSIB improves others consistently from different spurious correlations modes and are especially more effective than IB-ERM and IB-IRM when the spurious features enjoy much smaller entropy than the invariant features do.

\subsection{Experiments on color mnist dataset}
In this experiment, we set up a binary classification task for digit recognition -- identify whether the digit is less than 5 or more than 5. We use real-world dataset, the MNIST database of handwritten digits \footnote{\url{http://yann.lecun.com/exdb/mnist/}}, for the construction. Following our learning setting, we use color information as the spurious features that correlates strongly with the class label. By construction, the label is strongly correlated with the color than with the digit in the training environments but this correlation is broken in the test environment. Specifically, the designed three environments (two training environments and one test environment containing 10000 points each) of the color mnist are as follows: first, define a preliminary binary label $\hat{y}$ to the image base on the digit: $\hat{y}=0$ for digits 0-4 and $\hat{y}=1$ for 5-9. Second, obtain the final label $y$ by flipping $\hat{y}$ with probability 0.25. Then, we flip the final labels to obtain the color id, where the flipping probabilities with respect to two training environments and one test environment are 0.2 and 0.1, and 0.9. For better understanding, we randomly draw 20 examples for each label from each environment and visualize them in Figure \ref{fig-colormnist}.
\begin{figure}
    \centering
    \includegraphics[width=0.8\textwidth]{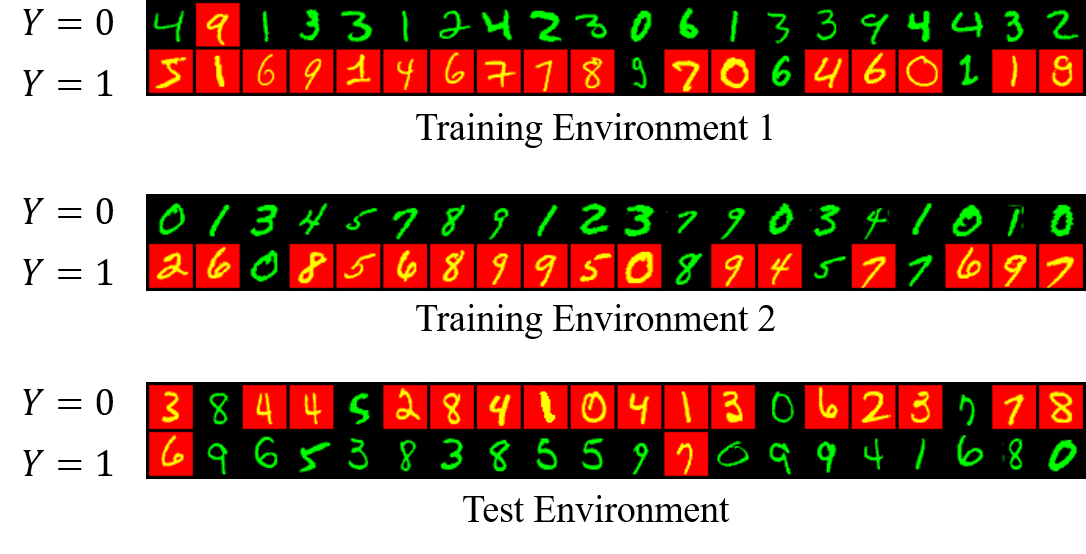}
    \caption{Visualization of the color mnist dataset.}
    \label{fig-colormnist}
\end{figure}

\begin{table}
    \centering
    \caption{Classification accuracy (\%) on color mnist dataset. "Oracle" in the table means that the training and test data are in the same environment.}
    \label{table-colormnist}
    \adjustbox{max width=\textwidth}{%
    \begin{tabular}{cccccc|c}
    \toprule
    Methods & ERM & IRM & IB-ERM & IB-IRM & CSIB & Oracle \\
    \midrule
    Accuracy &  9.94 $\pm$ 0.28 & 20.39 $\pm$ 2.76 & 9.94 $\pm$ 0.28 & 43.84 $\pm$ 12.48 & \textbf{60.03 $\pm$ 1.28} & 84.72 $\pm$ 0.65 \\
    \bottomrule
    \end{tabular}
    }
\end{table}
The classification results on color mnist dataset are shown in Table \ref{table-colormnist}. From the results, we can see that both ERM and IB-ERM methods are almost surely using the color features to achieve the task. Although IRM and IB-IRM methods have shown some improvements over ERM, only our method can perform better than a random prediction, which demonstrates the effectiveness of CSIB.

\section{Related works}\label{section-related-work}
We divide the works related to OOD generalization into two categories: theory and methods, though some of them belong to both.
\subsection{Theory of OOD generalization}
Based on different definitions to the distributional changes, we review the corresponding theory by the following three categories.

\textbf{Based on causality.} Due to the close connection between the distributional changes and the interventions discussed in the theory of causality \cite{pearl2009causality, peters2017elements}, the problem of OOD generalization is usually built in the framework of causal learning. The theory states that a response $Y$ is directly caused only by its parents variables $X_{Pa(Y)}$ and all interventions other that those on $Y$ do not change the conditional distribution of $\mathbb{P}(Y|X_{Pa(Y)})$. Such theory inspires a popular learning principle -- invariance principle -- that aims to discover a set of variables such that they remain invariance to the response $Y$ in all observed environments \cite{peters2016causal, heinze2018invariant, rojas2018invariant}. Invariant risk minimization (IRM) \cite{arjovsky2019invariant} is then proposed to learn a feature extractor $\Phi$ in an end-to-end way such that the optimal classifier based on the extracted features $\Phi(X)$ remains unchange in each environment. The theory in \cite{arjovsky2019invariant} shows the guarantee of IRM for OOD generalization under some general assumptions, but only focuses on the linear regression tasks. Different from the failure analyses of IRM for the classification tasks in \cite{rosenfeld2020risks, kamath2021does}, where the response Y is the cause of the spurious feature, Ahuja et al., \cite{ahuja2021invariance} analyse another scenario when the invariant feature is the cause of the spurious feature, and show that in this case, linear classification is more difficult than linear regression, where the invariance principle itself is insufficient to ensure the success of OOD generalization, and they also claim that the assumption of support overlap of invariant features is necessarily needed. They then propose a learning principle of information bottleneck-based invariant risk minimization (IB-IRM) for linear classification, which shows how to address the failures of IRM by adding information bottleneck \cite{tishby1999information} into the learning. In this work, we closely investigate the conditions identified in \cite{ahuja2021invariance} and first show that support overlap of invariant features is not necessarily needed for the success of OOD generalization. We further show several failure cases of IB-IRM and propose improved results to it.

Recently, some of works are proposed to tackle the challenge of OOD generalization in the non-linear regime \cite{lu2022invariant,liu2021learning}. Commonly, both of them use variational auto encoder (VAE)-based models \cite{kingma2013auto, rezende2014stochastic} to identify the latent variables from observations in the first stage. Then, these inferring latent variables are separated to two distinct parts of invariant (causal) and spurious (non-causal) features based on different assumptions to them. Specifically, Lu et al. \cite{lu2021nonlinear,lu2022invariant} assume that the latent variables conditioned on some accessible side information such as the environment index or class label are followed the exponential family distributions, and Liu et al. \cite{liu2021learning} directly disentangle the latent variables to two different parts during the inferring stage and assumes that the marginal distributions of them are independent to each other. These assumptions, however, are rather strong in general. Nevertheless, these solutions aim to capture the latent variables such that the response given these variables is invariant for different environments, which could still fail because the invariance principle itself is insufficient for OOD generalization in the classification tasks, as shown in \cite{ahuja2021invariance}.
In this work, we focus on the linear classification only and show a new theory of a new method that well addresses several OOD generalization failures in the linear settings. Our method could extend to the nonlinear regime by combing with the disentangled representation learning \cite{bengio2013representation} or causal representation learning \cite{scholkopf2021toward}. Specifically, once the latent representations are well disentangled, i.e., the latent features are represented by a linear transform of the causal features and spurious features, we then could apply our method to filter out the spurious features in the latent space such that only causal features remain.

\textbf{Based on robustness.} Different from those based on the causality, where different distributions are generated by intervention on a same SEM and the goal is to discover causal features, the robustness based methods aim to protect the model against the potential distributional shifts within the uncertainty set, which is usually constrained by f-divergence \cite{namkoong2016stochastic} or Wasserstein distance \cite{sinha2017certifying}. This series of works are theoretically addressed by distributionally robust optimization (DRO) under a minimax framework \cite{lee2018minimax,duchi2021learning}. Recently, some of works tend to discover the connections between causality and robustness \cite{buhlmann2020invariance}. Although these works show less relevance to us, it is possible that a well-defined measure of distribution divergence could help to effectively extract causal features under the robustness framework. This would be an interesting avenue for future research.

\textbf{Others.} Some other works assume that the distributions (domains) are generated from a hyper-distribution and aims to minimize the average risk estimation error bound \cite{blanchard2011generalizing,muandet2013domain,deshmukh2019generalization}. These works are often built based on the generalization theory under the independent and identically distributed (IID) assumption. And in \cite{ye2021towards}, it does not make any assumption to the distributional changes, and only studies the learnability of OOD generalization in a general way. All of these theories do not cover the OOD generalization problem under a single training environment or domain.

\subsection{Methods of OOD generalization}
\textbf{Based on invariance principle.} Inspired from the invariance principle \cite{peters2016causal,heinze2018invariant}, many methods are proposed by designing various loss to extract features to better satisfy the principle itself. IRMv1 \cite{arjovsky2019invariant} is the first objective to address this in an end-to-end way by adding a gradient penalty to the classifier. Following this work, Krueger et al. \cite{krueger2021out} suggest penalizing the variance of the risks, while Xie et al. \cite{xie2020risk} give the same objective but taking the square root of the variance. And many other alternatives could also be found \cite{jin2020domain,mahajan2021domain,bellot2020generalization}. It is clear that all of these methods aims to find an invariant predictor. Recently, Ahuja et al. \cite{ahuja2021invariance} find that for classification problem, finding the invariant predictor is not enough to extract causal features since the features could include the spurious information to make the predictor invariant across training environments, and they propose IB-IRM to address such failure. Similar ideas to IB-IRM could also be found in the work \cite{li2021invariant, alesiani2022gated}, where different loss functions are proposed to achieve the same purpose. Specifically, Alesiani et al. \cite{alesiani2022gated} also use information bottleneck (IB) for the help to drop spurious correlations, but their analyses only focus on the scenario when spurious features are independent to the causal features, which could be considered as a special case of ours. More recently, Wang et al. \cite{InvariantSubDG} propose the similar ideas to ours but only tackle the situation when the invariant features have the same distribution among all environments. In this work, we further show that IB-IRM could still fail in two cases due to the model may only rely on spurious features to meet the task of interest. We then propose counterfactual supervision-based information bottleneck (CSIB) method to address such failures and show improving results to the prior works.

\textbf{Based on distribution matching.} It is worth to note that there exist many works focused on learning domain invariant features representations \cite{ganin2015unsupervised,li2018deep,zhao2020domain}. Most of these works are inspired by the seminal theory of domain adaptation \cite{ben2006analysis, ben2010theory}. The goal of these methods is to learn a feature extractor $\Phi$ such that the marginal distribution of $\mathbb{P}(\Phi(X))$ or the conditional distribution of $\mathbb{P}(\Phi(X)|Y)$ is invariant across different domains. This is different from the invariance principle, where the goal is to make $\mathbb{P}(Y|\Phi(X))$ (or $\mathbb{E}(Y|\Phi(X))$) invariant. We refer readers to the papers of \cite{arjovsky2019invariant,zhao2019learning} for better understanding the details of why these distribution matching based methods often fail to address OOD generalization.

\textbf{Others.} Other related methods are various, including by using data augmentation in both image level \cite{xu2021fourier}
or feature level \cite{zhou2020domain}, by removing spurious correlations through stable learning \cite{zhang2021deep}, and by utilizing the inductive bias of neural network \cite{geirhos2018imagenet,wang2019learning} etc. Most of these methods are empirically inspired from the experiments and are verified to some specific datasets. Recently, empirical studies in \cite{gulrajani2020search,wiles2022a} notice that the real effects of many OOD generalization (domain generalization) methods are weak, which indicates that the benchmark-based evaluation criterions may be inadequate to validate the OOD generalization algorithms.

\section{Conclusion, limitations and future work}\label{section-conclusion-and-limitation}
In this paper, we focus on the OOD generalization problem of linear classification. We first revisit the fundamental assumptions and results of prior works and show that the condition of invariant features support overlap is not necessarily needed for the success of OOD generalization and thus propose a weaker counterpart. Then, we show two failure cases of IB-IRM (as well as ERM, IB-ERM, and IRM) and illustrate its intrinsic causes by theoretical analysis. Motivating by that, we further propose a new method -- counterfactual supervision-based information bottleneck (CSIB) and theoretically prove its effectiveness under some weaker assumptions. CSIB works even when accessing data from a single environment, and can easily extend to the multi-class problems. Finally, we design several synthetic datasets by our motivating examples for the experimental verification. Empirical observations among all comparing methods illustrate the effectiveness of CSIB.

Since we only take the linear problem into interest, including linear representation and linear classifier, any non-linear case of that would not be guaranteed by our theoretical results and thus CSIB may fail. Therefore, the same as prior works (IRM \cite{arjovsky2019invariant} and IB-IRM \cite{ahuja2021invariance}), non-linear challenge is still an unsolved problem \cite{rosenfeld2020risks, kamath2021does}. We believe this is of great value for investigating in future work since widely used data in the wild are non-linearly generated. Another fruitful direction is to design a powerful algorithm for entropy minimization during the learning process of CSIB. Currently, we use the variance of features to replace the entropy of the features during the optimization. However, variance and entropy are essentially different but a truly effective entropy minimization is the key to the success of CSIB. Another limitation of our method is that we have to require a further supervision to the counterfactual examples during the learning process, although it only takes one time for a single step.


\bibliographystyle{plain}
\bibliography{bing}
\newpage

\appendix

\newtheorem{appendix-definition}{Definition}[section]
\newtheorem{appendix-proposition}{Proposition}[section]
\newtheorem{appendix-theorem}{Theorem}[section]
\newtheorem{appendix-lemma}{Lemma}[section]
\newtheorem{appendix-assumption}{Assumption}[section]
\newtheorem{appendix-corollary}{Corollary}[section]
\newtheorem{appendix-example}{Example}[section]

\setcounter{table}{0}
\renewcommand{\thetable}{A\arabic{table}}
\setcounter{figure}{0}
\renewcommand{\thefigure}{A\arabic{figure}}

\section*{Appendix}

\section[\appendixname~\thesection]{Experiments details}
In this section, we provide more details on the experiments. The code to reproduce the experiments can be found at \url{https://github.com/szubing/CSIB}.
\subsection[\appendixname~\thesubsection]{Optimization loss of IB-ERM}
The objective function of IB-ERM is as follow:
\begin{equation}\label{appendix-IBERM}
 \quad \min_{w, \Phi} \sum_{e\in\mathcal{E}_{tr}} h^e(\Phi) \quad \text{s.t.} \ \frac{1}{|\mathcal{E}_{tr}|}\sum_{e\in \mathcal{E}_{tr}} R^e(w\circ\Phi) \leq r^{th}.
\end{equation}
Since the entropy of $h^e(\Phi) = H(\Phi(X^e))$ is hard to estimate by a differential variable that can be optimized by using gradient descent, we follow \cite{ahuja2021invariance} by using the variance instead of the entropy for optimization. The total loss function is given by
\begin{equation}\label{appendix-IBERMloss}
loss(w, \Phi) = \sum_{e\in \mathcal{E}_{tr}}(R^e(w\circ\Phi) + \lambda \text{Var}(\Phi))
\end{equation}
with a hyperparameter $\lambda$ onto it.


\subsection[\appendixname~\thesubsection]{Experiments setup}
\textbf{Model, hyperparameters, loss, and evaluation}. In all experiments, we follow the same protocol as prescribed by \cite{aubin2021linear, ahuja2021invariance} for the model/hyperparameter selection, training, and evaluation. Except those specified, for all experiments across three Examples and five comparing methods, the model is the same with a linear feature extractor $\Phi\in\mathbb{R}^{d\times d}$ followed by a linear classifier $w\in\mathbb{R}^{d+1}$. We use binary cross-entropy loss for classification. All hyperparameters, including the learning rate, the penalty term in IRM, or the $\lambda$ associated with the Var$(\Phi)$ in Equation (\ref{appendix-IBERMloss}), etc., are randomly searched and selected by using 20 test samples for validation. The results reported in the main manuscript use 3 hyperparameter queries of each and average over 5 data seeds. The results when searching over more hyperparameter values are reported in the supplementary experiments. The search spaces of all the hyperparameters follow the same as in \cite{aubin2021linear, ahuja2021invariance}. The classification test errors between 0 and 1 are reported.

\textbf{Compute description}. Our computing resource is one GPU of NVIDIA GeForce GTX 1080 Ti with 6 CPU cores of Intel(R) Core(TM) i7-8700 CPU @ 3.20GHz.

\textbf{Existing codes and datasets used.} In our experiments, we mainly rely on the following two github repositories: InvarianceUnitTests\footnote{https://github.com/facebookresearch/InvarianceUnitTests} and IB-IRM\footnote{https://github.com/ahujak/IB-IRM}.

\subsection[\appendixname~\thesubsection]{Supplementary experiments}\label{appendix-experiments}
The purpose of the first supplementary experiment is to illustrate what the result would be when we increase the number of running seeds in the hyperparameters selection. These results are shown in Table \ref{appendix-table-main_results}, where we increase the number of hyperparameter queries to 10 of each. It is clear that in overall, the results of CSIB in Table \ref{appendix-table-main_results} are much better and have less fluctuations than those in Table \ref{table-main_results}, and the conclusions remain almost the same as we have summarized in section \ref{section-summaryofresults}. This further verifies the effectiveness of CSIB method.

\begin{table}
\caption{Supplementary results when using 10 hyperparameter queries. \#Envs means the number of training environments, and (min) reports the minimal test classification error across different running data seeds.}\label{appendix-table-main_results}
\centering
\adjustbox{max width=\textwidth}{%
\begin{tabular}{lccccccc}
\toprule
            &\#Envs    & ERM (min)       & IRM (min)         & IB-ERM (min)         & IB-IRM (min)         & CSIB (min)                     & Oracle (min)         \\
\midrule
Example 1   & 1   & 0.50 $\pm$ 0.01 (0.49) & 0.50 $\pm$ 0.01 (0.49) & 0.23 $\pm$ 0.02 (0.22) & 0.31 $\pm$ 0.10 (0.25) & 0.23 $\pm$ 0.02 (0.22)  & 0.00 $\pm$ 0.00 (0.00) \\
Example 1S  & 1   & 0.50 $\pm$ 0.00 (0.49) & 0.50 $\pm$ 0.00 (0.49) & 0.09 $\pm$ 0.04 (0.04) & 0.30 $\pm$ 0.10 (0.25) & 0.08 $\pm$ 0.04 (0.04)  & 0.00 $\pm$ 0.00 (0.00) \\
Example 2   & 1   & 0.40 $\pm$ 0.20 (0.00) & 0.00 $\pm$ 0.00 (0.00) & 0.50 $\pm$ 0.00 (0.49) & 0.48 $\pm$ 0.03 (0.43) & 0.00 $\pm$ 0.00 (0.00)  & 0.00 $\pm$ 0.00 (0.00) \\
Example 2S  & 1   & 0.50 $\pm$ 0.00 (0.50) & 0.30 $\pm$ 0.25 (0.00) & 0.50 $\pm$ 0.00 (0.50) & 0.50 $\pm$ 0.01 (0.48) & 0.00 $\pm$ 0.00 (0.00)  & 0.00 $\pm$ 0.00 (0.00) \\
Example 3   & 1   & 0.16 $\pm$ 0.06 (0.09) & 0.03 $\pm$ 0.00 (0.03) & 0.50 $\pm$ 0.01 (0.49) & 0.41 $\pm$ 0.09 (0.25) & 0.02 $\pm$ 0.01 (0.00)  & 0.00 $\pm$ 0.00 (0.00) \\
Example 3S  & 1   & 0.16 $\pm$ 0.06 (0.10) & 0.04 $\pm$ 0.01 (0.02) & 0.50 $\pm$ 0.00 (0.50) & 0.41 $\pm$ 0.12 (0.26) & 0.01 $\pm$ 0.01 (0.00)  & 0.00 $\pm$ 0.00 (0.00) \\ \hline
Example 1   & 3   & 0.44 $\pm$ 0.01 (0.44) & 0.44 $\pm$ 0.01 (0.44) & 0.21 $\pm$ 0.00 (0.21) & 0.21 $\pm$ 0.10 (0.06) & 0.21 $\pm$ 0.00 (0.21)  & 0.00 $\pm$ 0.00 (0.00) \\
Example 1S  & 3   & 0.45 $\pm$ 0.00 (0.44) & 0.45 $\pm$ 0.00 (0.44) & 0.09 $\pm$ 0.03 (0.05) & 0.23 $\pm$ 0.13 (0.01) & 0.09 $\pm$ 0.03 (0.05)  & 0.00 $\pm$ 0.00 (0.00) \\
Example 2   & 3   & 0.13 $\pm$ 0.07 (0.00) & 0.00 $\pm$ 0.00 (0.00) & 0.50 $\pm$ 0.00 (0.50) & 0.33 $\pm$ 0.04 (0.25) & 0.00 $\pm$ 0.00 (0.00)  & 0.00 $\pm$ 0.00 (0.00) \\
Example 2S  & 3   & 0.50 $\pm$ 0.00 (0.50) & 0.14 $\pm$ 0.20 (0.00) & 0.50 $\pm$ 0.00 (0.50) & 0.34 $\pm$ 0.01 (0.33) & 0.00 $\pm$ 0.00 (0.00)  & 0.00 $\pm$ 0.00 (0.00) \\
Example 3   & 3   & 0.17 $\pm$ 0.04 (0.14) & 0.02 $\pm$ 0.00 (0.02) & 0.50 $\pm$ 0.01 (0.49) & 0.43 $\pm$ 0.08 (0.29) & 0.01 $\pm$ 0.00 (0.00)  & 0.00 $\pm$ 0.00 (0.00) \\
Example 3S  & 3   & 0.17 $\pm$ 0.04 (0.13) & 0.02 $\pm$ 0.00 (0.02) & 0.50 $\pm$ 0.00 (0.50) & 0.36 $\pm$ 0.18 (0.07) & 0.01 $\pm$ 0.00 (0.00)  & 0.00 $\pm$ 0.00 (0.00) \\ \hline
Example 1   & 6   & 0.46 $\pm$ 0.01 (0.44) & 0.46 $\pm$ 0.09 (0.41) & 0.22 $\pm$ 0.01 (0.21) & 0.41 $\pm$ 0.11 (0.26) & 0.22 $\pm$ 0.01 (0.21)  & 0.00 $\pm$ 0.00 (0.00) \\
Example 1S  & 6   & 0.46 $\pm$ 0.02 (0.44) & 0.46 $\pm$ 0.02 (0.44) & 0.06 $\pm$ 0.04 (0.02) & 0.45 $\pm$ 0.07 (0.41) & 0.06 $\pm$ 0.04 (0.02)  & 0.00 $\pm$ 0.00 (0.00) \\
Example 2   & 6   & 0.21 $\pm$ 0.03 (0.17) & 0.00 $\pm$ 0.00 (0.00) & 0.50 $\pm$ 0.00 (0.50) & 0.36 $\pm$ 0.03 (0.31) & 0.00 $\pm$ 0.00 (0.00)  & 0.00 $\pm$ 0.00 (0.00) \\
Example 2S  & 6   & 0.50 $\pm$ 0.00 (0.50) & 0.10 $\pm$ 0.20 (0.00) & 0.50 $\pm$ 0.00 (0.50) & 0.19 $\pm$ 0.16 (0.01) & 0.00 $\pm$ 0.00 (0.00)  & 0.00 $\pm$ 0.00 (0.00) \\
Example 3   & 6   & 0.17 $\pm$ 0.03 (0.14) & 0.02 $\pm$ 0.00 (0.02) & 0.50 $\pm$ 0.00 (0.49) & 0.37 $\pm$ 0.16 (0.16) & 0.01 $\pm$ 0.00 (0.00)  & 0.00 $\pm$ 0.00 (0.00) \\
Example 3S  & 6   & 0.17 $\pm$ 0.03 (0.14) & 0.02 $\pm$ 0.00 (0.02) & 0.50 $\pm$ 0.00 (0.50) & 0.46 $\pm$ 0.09 (0.28) & 0.01 $\pm$ 0.00 (0.00)  & 0.00 $\pm$ 0.00 (0.00) \\
\bottomrule
\end{tabular}
}
\end{table}

\textbf{Observation on different settings in Example 1/1S.} In our main experiments of Example 1/1S, we set $p^e=1$ and $q=0$ to make the spurious features and the invariant features both linearly separable to confuse each other. Here, we analyse what the result would be if we vary the values of them. Following \cite{aubin2021linear}, we set $p^{e_0}=0.95$, $p^{e_1}=0.97$, $p^{e_2}=0.99$, and $p^{e_j}\sim \text{Uniform}(0.9,1)$ to make spurious features linearly inseparable, and $q$ is set to 0/0.05 to make invariant features linearly separable/inseparable. Table \ref{appendix-table-example1} shows the corresponding results. Interestingly, we find that all methods except for IB-IRM have ideal error rate (the same as the Oracle) when the spurious features are linearly inseparable ($p^e\neq 1$), even when the invariant features are linearly inseparable too ($q=0.05$). Why would this happen? We then remove the linear embedding $\Phi$, the results are presented in Table \ref{appendix-table-example1-withoutbackbone}. Comparing the results between Tables \ref{appendix-table-example1} and \ref{appendix-table-example1-withoutbackbone}, we found there is a significant inductive bias of neural network, though the model is linear. Further analysis to such observation is out of scope of this paper, but this would be an interesting avenue for future research.

\begin{table}
\caption{Results in Example 1/1S, where the learning model is a linear embedding $\Phi\in\mathbb{R}^{d\times d}$ followed by a linear classifier $w\in \mathbb{R}^{d+1}$. }
\begin{center}
\adjustbox{max width=\textwidth}{%
\begin{tabular}{lccccccccc}
\toprule
     & \#Envs   & $p^e=1$? & q       & ERM             & IB-ERM          & IB-IRM          & CSIB             & IRM             & Oracle          \\
\midrule
Example 1   & 1   & Yes & 0      & 0.50 $\pm$ 0.01 & 0.23 $\pm$ 0.02 & 0.31 $\pm$ 0.10 & 0.23 $\pm$ 0.02 & 0.50 $\pm$ 0.01 & 0.00 $\pm$ 0.00 \\
Example 1S  & 1   & Yes & 0      & 0.50 $\pm$ 0.00 & 0.46 $\pm$ 0.04 & 0.30 $\pm$ 0.10 & 0.46 $\pm$ 0.04 & 0.50 $\pm$ 0.00 & 0.00 $\pm$ 0.00 \\
Example 1   & 3   & Yes & 0      & 0.45 $\pm$ 0.01 & 0.22 $\pm$ 0.01 & 0.23 $\pm$ 0.13 & 0.22 $\pm$ 0.01 & 0.45 $\pm$ 0.01 & 0.00 $\pm$ 0.00 \\
Example 1S  & 3   & Yes & 0      & 0.45 $\pm$ 0.00 & 0.41 $\pm$ 0.04 & 0.27 $\pm$ 0.11 & 0.41 $\pm$ 0.04 & 0.45 $\pm$ 0.00 & 0.00 $\pm$ 0.00 \\
Example 1   & 6   & Yes & 0      & 0.46 $\pm$ 0.01 & 0.22 $\pm$ 0.01 & 0.37 $\pm$ 0.14 & 0.22 $\pm$ 0.01 & 0.46 $\pm$ 0.09 & 0.00 $\pm$ 0.00 \\
Example 1S  & 6   & Yes & 0      & 0.46 $\pm$ 0.02 & 0.35 $\pm$ 0.10 & 0.42 $\pm$ 0.12 & 0.35 $\pm$ 0.10 & 0.46 $\pm$ 0.02 & 0.00 $\pm$ 0.00 \\
\hline
Example 1   & 1   & No & 0      & 0.00 $\pm$ 0.00 & 0.00 $\pm$ 0.00 & 0.15 $\pm$ 0.20 & 0.00 $\pm$ 0.00 & 0.00 $\pm$ 0.00 & 0.00 $\pm$ 0.00 \\
Example 1S  & 1   & No & 0      & 0.00 $\pm$ 0.00 & 0.00 $\pm$ 0.00 & 0.12 $\pm$ 0.19 & 0.00 $\pm$ 0.00 & 0.00 $\pm$ 0.00 & 0.00 $\pm$ 0.00 \\
Example 1   & 3   & No & 0      & 0.00 $\pm$ 0.00 & 0.00 $\pm$ 0.00 & 0.00 $\pm$ 0.00 & 0.00 $\pm$ 0.00 & 0.00 $\pm$ 0.00 & 0.00 $\pm$ 0.00 \\
Example 1S  & 3   & No & 0      & 0.00 $\pm$ 0.00 & 0.00 $\pm$ 0.00 & 0.00 $\pm$ 0.01 & 0.00 $\pm$ 0.00 & 0.00 $\pm$ 0.00 & 0.00 $\pm$ 0.00 \\
Example 1   & 6   & No & 0      & 0.00 $\pm$ 0.00 & 0.00 $\pm$ 0.00 & 0.30 $\pm$ 0.20 & 0.00 $\pm$ 0.00 & 0.00 $\pm$ 0.00 & 0.00 $\pm$ 0.00 \\
Example 1S  & 6   & No & 0      & 0.00 $\pm$ 0.00 & 0.00 $\pm$ 0.00 & 0.31 $\pm$ 0.20 & 0.00 $\pm$ 0.00 & 0.04 $\pm$ 0.06 & 0.00 $\pm$ 0.00 \\
\hline
Example 1   & 1   & No & 0.05      & 0.05 $\pm$ 0.00 & 0.05 $\pm$ 0.00 & 0.32 $\pm$ 0.22 & 0.05 $\pm$ 0.00 & 0.05 $\pm$ 0.00 & 0.05 $\pm$ 0.00 \\
Example 1S   & 1   & No & 0.05     & 0.05 $\pm$ 0.00 & 0.05 $\pm$ 0.00 & 0.19 $\pm$ 0.17 & 0.05 $\pm$ 0.00 & 0.05 $\pm$ 0.00 & 0.05 $\pm$ 0.00 \\
Example 1   & 3   & No & 0.05      & 0.05 $\pm$ 0.00 & 0.05 $\pm$ 0.00 & 0.07 $\pm$ 0.03 & 0.05 $\pm$ 0.00 & 0.05 $\pm$ 0.00 & 0.05 $\pm$ 0.00 \\
Example 1S   & 3   & No & 0.05     & 0.05 $\pm$ 0.00 & 0.05 $\pm$ 0.00 & 0.05 $\pm$ 0.00 & 0.05 $\pm$ 0.00 & 0.05 $\pm$ 0.00 & 0.05 $\pm$ 0.00 \\
Example 1   & 6   & No & 0.05      & 0.05 $\pm$ 0.00 & 0.05 $\pm$ 0.00 & 0.30 $\pm$ 0.21 & 0.05 $\pm$ 0.00 & 0.05 $\pm$ 0.00 & 0.05 $\pm$ 0.00 \\
Example 1S   & 6   & No & 0.05     & 0.05 $\pm$ 0.00 & 0.05 $\pm$ 0.00 & 0.32 $\pm$ 0.19 & 0.05 $\pm$ 0.00 & 0.05 $\pm$ 0.00 & 0.05 $\pm$ 0.00 \\
\bottomrule
\end{tabular}}
\end{center}
\label{appendix-table-example1}
\end{table}

\begin{table}
\caption{Results in Example 1/1S, where the learning model is a linear classifier $w\in \mathbb{R}^{d+1}$ without linear embedding $\Phi$. CSIB must requires a feature extractor, so there are not results related to CSIB.}
\begin{center}
\adjustbox{max width=\textwidth}{%
\begin{tabular}{lccccccccc}
\toprule
     & \#Envs   & $p^e=1$? & q       & ERM             & IB-ERM          & IB-IRM                & IRM             & Oracle          \\
\midrule
Example 1    & 1   & Yes & 0      & 0.50 $\pm$ 0.01 & 0.25 $\pm$ 0.01 & 0.31 $\pm$ 0.10 & 0.50 $\pm$ 0.01 & 0.00 $\pm$ 0.00 \\
Example 1S   & 1   & Yes & 0    & 0.50 $\pm$ 0.00 & 0.49 $\pm$ 0.01 & 0.30 $\pm$ 0.10 & 0.50 $\pm$ 0.00 & 0.00 $\pm$ 0.00 \\
Example 1    & 3   & Yes & 0      & 0.44 $\pm$ 0.01 & 0.23 $\pm$ 0.01 & 0.21 $\pm$ 0.10 & 0.44 $\pm$ 0.01 & 0.00 $\pm$ 0.00 \\
Example 1S   & 3   & Yes & 0     & 0.45 $\pm$ 0.00 & 0.44 $\pm$ 0.01 & 0.42 $\pm$ 0.04 & 0.45 $\pm$ 0.00 & 0.00 $\pm$ 0.00 \\
Example 1    & 6   & Yes & 0      & 0.46 $\pm$ 0.01 & 0.27 $\pm$ 0.07 & 0.41 $\pm$ 0.11 & 0.46 $\pm$ 0.01 & 0.01 $\pm$ 0.01 \\
Example 1S   & 6   & Yes & 0     & 0.46 $\pm$ 0.02 & 0.42 $\pm$ 0.08 & 0.46 $\pm$ 0.09 & 0.46 $\pm$ 0.02 & 0.01 $\pm$ 0.02 \\
\hline
Example 1    & 1   & No & 0      & 0.50 $\pm$ 0.01 & 0.00 $\pm$ 0.00 & 0.15 $\pm$ 0.20 & 0.50 $\pm$ 0.01 & 0.00 $\pm$ 0.00 \\
Example 1S   & 1   & No & 0      & 0.50 $\pm$ 0.00 & 0.00 $\pm$ 0.00 & 0.13 $\pm$ 0.19 & 0.50 $\pm$ 0.00 & 0.00 $\pm$ 0.00 \\
Example 1    & 3   & No & 0      & 0.45 $\pm$ 0.01 & 0.00 $\pm$ 0.00 & 0.00 $\pm$ 0.00 & 0.45 $\pm$ 0.01 & 0.00 $\pm$ 0.00 \\
Example 1S   & 3   & No & 0     & 0.45 $\pm$ 0.00 & 0.01 $\pm$ 0.02 & 0.08 $\pm$ 0.14 & 0.46 $\pm$ 0.02 & 0.00 $\pm$ 0.00 \\
Example 1    & 6   & No & 0      & 0.46 $\pm$ 0.01 & 0.10 $\pm$ 0.16 & 0.30 $\pm$ 0.20 & 0.46 $\pm$ 0.01 & 0.01 $\pm$ 0.01 \\
Example 1S   & 6   & No & 0    & 0.46 $\pm$ 0.01 & 0.24 $\pm$ 0.19 & 0.41 $\pm$ 0.12 & 0.47 $\pm$ 0.03 & 0.01 $\pm$ 0.02 \\
\hline
Example 1    & 1   & No & 0.05      & 0.50 $\pm$ 0.01 & 0.05 $\pm$ 0.00 & 0.32 $\pm$ 0.22 & 0.50 $\pm$ 0.01 & 0.05 $\pm$ 0.00 \\
Example 1S   & 1   & No & 0.05      & 0.50 $\pm$ 0.01 & 0.05 $\pm$ 0.01 & 0.20 $\pm$ 0.17 & 0.50 $\pm$ 0.00 & 0.05 $\pm$ 0.00 \\
Example 1    & 3   & No & 0.05     & 0.45 $\pm$ 0.01 & 0.05 $\pm$ 0.00 & 0.07 $\pm$ 0.03 & 0.47 $\pm$ 0.01 & 0.05 $\pm$ 0.00 \\
Example 1S   & 3   & No & 0.05     & 0.45 $\pm$ 0.01 & 0.07 $\pm$ 0.03 & 0.11 $\pm$ 0.11 & 0.46 $\pm$ 0.01 & 0.05 $\pm$ 0.00 \\
Example 1    & 6   & No & 0.05      & 0.47 $\pm$ 0.01 & 0.14 $\pm$ 0.14 & 0.30 $\pm$ 0.21 & 0.47 $\pm$ 0.01 & 0.05 $\pm$ 0.00 \\
Example 1S   & 6   & No & 0.05     & 0.47 $\pm$ 0.01 & 0.27 $\pm$ 0.18 & 0.42 $\pm$ 0.11 & 0.47 $\pm$ 0.01 & 0.05 $\pm$ 0.01 \\
\bottomrule
\end{tabular}}
\end{center}
\label{appendix-table-example1-withoutbackbone}
\end{table}

\textbf{Observation on linearly separable properties of high-dimensional data.} In here, we empirically show that for $o$-dimensional data, we have high probability that $o$ randomly drawn points are linearly separable for any two subsets. To verify that, we design a random experiment as follows: (1) Let $o\in[100, 10000]$, and we randomly drawn $o$ points from $[-1, 1]^o$, and give random labels to these $o$ points of 0 or 1; (2) We train a linear classifier to fit these $o$ points and report the final training error; (3) Do (1) and (2) 100 times for different seeds. Our results show that for 100 runs, all training errors reach to 0 for every $o$, which proves our conjecture.

Then, we look back to the Theorem \ref{theorem-analysis2}. For real data like image, the dimension of spurious features $o$ is often high. Assume different environments enjoy different spurious points randomly, then from the above observation, there is a high probability that the following events will occur: For any labeling data in the $n$ training environments with $n<o/2$ (2 is due to binary label), models could achieve zero training error by relying on spurious features only. This illustrates why prior methods easily fail to address OOD generalization under the Assumption \ref{assp-mode-3}.

\section[\appendixname~\thesection]{Proofs}
\subsection[\appendixname~\thesubsection]{Preliminary}\label{appendix-preliminary}
Before our proofs, we first review some useful properties related to the entropy \cite{thomas2006elements, ahuja2021invariance}.

\textbf{Entropy.} For discrete random variable $X\sim \mathbb{P}_X$ with support $\mathcal{X}$, its entropy (Shannon entropy) is defined as
\begin{equation}
H(X) = -\sum_{x\in\mathcal{X}} \mathbb{P}_X(X=x)\log (\mathbb{P}_X(X=x))
\end{equation}
The differential entropy of the continuous random variable $X\sim \mathbb{P}_X$ with support $\mathcal{X}$ is given by
\begin{equation}
h(X) = -\int_{x\in\mathcal{X}}p_X(x)\log (p_X(x))dx,
\end{equation}
where $p_X(x)$ is the probability density function of the distribution $\mathbb{P}_X$. Sometimes we may confuse using $H(X)$ or $h(X)$ to represent its entropy no matter $X$ is discrete or continuous.
\begin{Lemma}\label{appendix-lemma-iib-1}
If $X$ and $Y$ are discrete random variables that are independent, then
\begin{equation}
H(X+Y) \geq \max\{H(X), H(Y)\}.
\end{equation}
\end{Lemma}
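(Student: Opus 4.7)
The plan is to use two standard information-theoretic facts: (i) conditioning does not increase entropy, i.e.\ $H(Z) \geq H(Z \mid Y)$ for any discrete random variables $Z, Y$; and (ii) entropy is invariant under bijective transformations, so that for fixed $Y = y$ the map $x \mapsto x + y$ is a bijection on the support of $X$, giving $H(X + Y \mid Y = y) = H(X \mid Y = y)$.

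By symmetry in $X$ and $Y$, it suffices to prove $H(X+Y) \geq H(X)$. First I would set $Z = X + Y$ and write
\begin{equation*}
    H(Z) \;\geq\; H(Z \mid Y)
\end{equation*}
by the standard ``conditioning reduces entropy'' inequality. Next I would evaluate $H(Z \mid Y)$ by conditioning on each value $Y = y$: since $Z = X + y$ is a deterministic bijective function of $X$ in that conditional distribution, $H(Z \mid Y = y) = H(X \mid Y = y)$. Averaging over $y$ gives $H(Z \mid Y) = H(X \mid Y)$. Finally, independence of $X$ and $Y$ yields $H(X \mid Y) = H(X)$, so $H(X+Y) \geq H(X)$.

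Swapping the roles of $X$ and $Y$ in the same argument yields $H(X+Y) \geq H(Y)$, and taking the maximum gives the claim. There is no real obstacle here: the only subtlety is being careful that the bijection step requires $Y$ to be fixed (hence the conditional form), and that we use independence precisely at the last step to drop the conditioning. The argument goes through for discrete $X, Y$ with no further assumptions on their supports.
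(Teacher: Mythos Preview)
Your proof is correct and essentially matches the paper's: the paper computes $H(Z\mid X)=H(Y)$ explicitly and then uses $H(Z)=I(Z;X)+H(Z\mid X)\geq H(Y)$, which is exactly your ``conditioning reduces entropy'' step plus the bijection/independence observation, just written via mutual information instead of the conditional-entropy inequality.
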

\begin{proof}
Define $Z=X+Y$. Since $X\perp Y$, we have
\begin{align*}
H(Z|X) &= -\sum_{x\in\mathcal{X}}\mathbb{P}_X(x)\sum_{z\in\mathcal{Z}}\mathbb{P}_{Z|X}(Z=z|X=x)\log (\mathbb{P}_{Z|X}(Z=z|X=x)) \\
 &=  -\sum_{x\in\mathcal{X}}\mathbb{P}_X(x)\sum_{z\in\mathcal{Z}}\mathbb{P}_{Y|X}(Y=z-x|X=x)\log (\mathbb{P}_{Y|X}(Y=z-x|X=x)) \\
 &= -\sum_{x\in\mathcal{X}}\mathbb{P}_X(x)\sum_{z\in\mathcal{Z}}\mathbb{P}_{Y}(Y=z-x)\log (\mathbb{P}_{Y}(Y=z-x))\\
 &= -\sum_{x\in\mathcal{X}}\mathbb{P}_X(x)\sum_{y\in\mathcal{Y}}\mathbb{P}_{Y}(Y=y)\log (\mathbb{P}_{Y}(Y=y))\\
 &= H(Y),
\end{align*}
and similar we have $H(Z|Y)=H(X)$. Therefore,
\begin{align}
H(X+Y) = I(Z, X) + H(Z|X) = I(Z,X) + H(Y) \geq H(Y) \\
H(X+Y) = I(Z, Y) + H(Z|Y) = I(Z,Y) + H(X) \geq H(X).  \label{appendix-eq-1}
\end{align}
This completes the proof.
\end{proof}

\begin{Lemma}\label{appendix-lemma-iib-2}
If $X$ and $Y$ are continuous random variables that are independent, then
\begin{equation}
h(X+Y) \geq \max\{h(X), h(Y)\}.
\end{equation}
\end{Lemma}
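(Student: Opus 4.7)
The plan is to mirror the discrete proof (Lemma A.1) exactly, substituting differential entropy and its standard properties for Shannon entropy. Let $Z := X+Y$ and appeal to the decomposition $h(Z) = I(Z;X) + h(Z\mid X)$, together with non-negativity of mutual information $I(Z;X) \geq 0$. The whole argument then reduces to showing that $h(Z\mid X) = h(Y)$ (and symmetrically $h(Z\mid Y) = h(X)$).

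To establish $h(Z \mid X) = h(Y)$, I would fix $X=x$ and observe that, conditional on this event, $Z = x + Y$ is just a deterministic translation of $Y$ by the constant $x$. Since differential entropy is translation-invariant, $h(Z \mid X = x) = h(x + Y) = h(Y)$ for every $x$ in the support of $X$. Integrating over $x$ against the density of $X$ gives $h(Z\mid X) = h(Y)$. Independence $X \perp Y$ is used in the step where the conditional density of $Y$ given $X=x$ is identified with the marginal density of $Y$, exactly as in the discrete computation in Lemma A.1; one simply replaces the probability mass function by the density and the sum over $\mathcal{Y}$ by an integral. Concretely, writing $p_Z(z\mid x) = p_Y(z - x)$ and changing variables $y = z - x$ inside the integral defining $h(Z\mid X=x)$ recovers $h(Y)$.

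Combining the two pieces yields
\begin{equation*}
h(X+Y) \;=\; I(Z;X) + h(Z\mid X) \;=\; I(Z;X) + h(Y) \;\geq\; h(Y),
\end{equation*}
and by the symmetric computation with the roles of $X$ and $Y$ swapped, $h(X+Y) \geq h(X)$ as well. Taking the maximum of the two lower bounds finishes the proof.

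The only subtlety, and the place where care is required, is that differential entropy does not always behave like Shannon entropy: it can be negative and the decomposition $h(Z) = I(Z;X) + h(Z\mid X)$ requires that all the quantities involved be well-defined and finite. I would therefore briefly note that the lemma is stated under the implicit regularity assumption that the densities of $X$, $Y$, and $X+Y$ exist and their differential entropies are finite (consistent with the lower-bounded differential entropy convention used in the definition of $h^e(\Phi)$ after Equation~(\ref{equ-IBIRM})); under this proviso the translation-invariance identity and the chain rule apply without issue, and the proof proceeds exactly as above.
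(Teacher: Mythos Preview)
Your proposal is correct and follows essentially the same approach as the paper: define $Z=X+Y$, show $h(Z\mid X)=h(Y)$ via the density substitution $p_{Z\mid X}(z\mid x)=p_Y(z-x)$ and a change of variables, then apply the chain rule $h(Z)=I(Z;X)+h(Z\mid X)$ together with $I(Z;X)\ge 0$, and repeat symmetrically. Your added remark on regularity (finiteness of the entropies) is a reasonable caveat that the paper leaves implicit.
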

\begin{proof}
Define $Z=X+Y$. Since $X\perp Y$, we have
\begin{align*}
h(Z|X) &= -\int_{x\in\mathcal{X}}p_X(x)\int_{z\in\mathcal{Z}}p_{Z|X}(Z=z|X=x)\log (p_{Z|X}(Z=z|X=x))dxdz \\
 &=  -\int_{x\in\mathcal{X}}p_X(x)\int_{z\in\mathcal{Z}}p_{Y|X}(Y=z-x|X=x)\log (p_{Y|X}(Y=z-x|X=x))dxdz \\
 &= -\int_{x\in\mathcal{X}}p_X(x)\int_{z\in\mathcal{Z}}p_{Y}(Y=z-x)\log (p_{Y}(Y=z-x))dxdz \\
 &= -\int_{x\in\mathcal{X}}p_X(x)dx\int_{y\in\mathcal{Y}}p_{Y}(Y=y)\log (p_{Y}(Y=y))dy \\
 &= h(Y),
\end{align*}
and similar we have $h(Z|Y)=h(X)$. Therefore,
\begin{align}
h(X+Y) = I(Z, X) + h(Z|X) = I(Z,X) + h(Y) \geq h(Y) \\
h(X+Y) = I(Z, Y) + h(Z|Y) = I(Z,Y) + h(X) \geq h(X). \label{appendix-eq-2}
\end{align}
This completes the proof.
\end{proof}

\begin{Lemma}\label{appendix-lemma-iib-3}
If $X$ and $Y$ are discrete random variables that are independent with the supports satisfying $2\leq|\mathcal{X}|<\infty, 2\leq|\mathcal{Y}|<\infty$, then
\begin{equation}
H(X+Y) > \max\{H(X), H(Y)\}.
\end{equation}
\end{Lemma}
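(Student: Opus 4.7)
The plan is to build directly on Lemma A.1.1 by strengthening the two inequalities derived there. Recall that the proof of that lemma produced the identities $H(X+Y) = I(Z,X) + H(Y)$ and $H(X+Y) = I(Z,Y) + H(X)$, where $Z = X+Y$. So the strict inequality $H(X+Y) > \max\{H(X),H(Y)\}$ reduces to showing that both $I(Z,X) > 0$ and $I(Z,Y) > 0$, equivalently that $Z$ is neither independent of $X$ nor of $Y$.

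The core step, and what I expect to be the only real obstacle, is a short contradiction argument ruling out $Z \perp X$ under the finite-support hypothesis. The idea is as follows. Since $X$ and $Y$ are independent, the conditional law of $Z$ given $X = x$ is exactly the law of $Y$ translated by $x$. If we were to have $Z \perp X$, then this conditional law would be the same for every $x \in \mathcal{X}$, so in particular the supports would agree: $\mathcal{Y} + x_1 = \mathcal{Y} + x_2$ for all $x_1, x_2 \in \mathcal{X}$. Because $|\mathcal{X}| \geq 2$, we can pick $x_1 \neq x_2$, giving a nonzero $\delta = x_2 - x_1$ with $\mathcal{Y} = \mathcal{Y} + \delta$. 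Iterating, $\mathcal{Y}$ would contain $y + k\delta$ for every $y \in \mathcal{Y}$ and every integer $k$, contradicting $|\mathcal{Y}| < \infty$ (since $\mathcal{Y}$ is nonempty by $|\mathcal{Y}| \geq 2$). Hence $Z$ cannot be independent of $X$, so $I(Z,X) > 0$.

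By an identical argument with the roles of $X$ and $Y$ swapped, using $|\mathcal{Y}| \geq 2$, we obtain $I(Z,Y) > 0$. Plugging these strict inequalities back into the two identities from Lemma A.1.1 yields $H(X+Y) > H(Y)$ and $H(X+Y) > H(X)$ simultaneously, which gives the claimed strict bound. The only place the finiteness of both supports is used is in the translation-invariance contradiction; the lower bound of $2$ on each support is used to ensure there is at least one nonzero translation to exploit and that $\mathcal{Y}$ (respectively $\mathcal{X}$) is nonempty so the iteration blows up. No additional machinery beyond Lemma A.1.1 and elementary set reasoning is needed.
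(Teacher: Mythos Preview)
Your proposal is correct and follows the same overall scaffold as the paper: both invoke Lemma~A.1.1 to reduce the strict inequality to showing $I(Z,X)>0$ and $I(Z,Y)>0$, and then argue by contradiction that $Z=X+Y$ cannot be independent of $X$ (resp.\ $Y$). The only difference is the mechanism of the contradiction. The paper conditions on the extreme event $\{Z=x_{\max}+y_{\max}\}$ and observes that this forces $Y=y_{\max}$ with probability one, whereas $\mathbb{P}(Y=y_{\max})<1$ because $|\mathcal{Y}|\ge 2$; this directly exhibits a conditional probability differing from the marginal. Your argument instead notes that $Z\perp X$ would make the law of $Y+x$ independent of $x$, hence $\mathcal{Y}=\mathcal{Y}+\delta$ for some $\delta\ne 0$, contradicting $|\mathcal{Y}|<\infty$. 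Both are short and elementary; the paper's version leans on the existence of maxima (so implicitly on a total order and finite support), while yours uses the finiteness hypothesis via translation invariance and would carry over verbatim to group-valued variables without an order structure.
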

\begin{proof}
From Lemma \ref{appendix-lemma-iib-1} and due to the symmetry of $X$ and $Y$, we only need to prove $H(X+Y) \neq H(X)$. The proof is by contradiction. Suppose $H(X+Y) = H(X)$, then from Equation \ref{appendix-eq-1} follows that $I(X+Y, Y) = 0$, thus $X+Y\perp Y$. However, $\mathbb{P}(Y=y_{max}|X+Y=x_{max}+y_{max}) = 1$, which is different from $\mathbb{P}(Y=y_{max}) < 1$ (due to $|Y|\geq2$). This contradicts $X+Y\perp Y$.
\end{proof}

\begin{Lemma}\label{appendix-lemma-iib-4}
If $X$ and $Y$ are continuous random variables that are independent and have a bounded support, then
\begin{equation}
h(X+Y) > \max\{h(X), h(Y)\}.
\end{equation}
\end{Lemma}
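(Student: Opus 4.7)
The plan is to adapt Lemma~A.3 to the continuous setting. From the identity $h(Z|X) = h(Y)$ established in the proof of Lemma~A.2 (with $Z := X+Y$), I have $h(Z) = I(Z, X) + h(Y)$, and symmetrically $h(Z) = I(Z, Y) + h(X)$. Hence $h(Z) > \max\{h(X), h(Y)\}$ is equivalent to $I(Z,Y) > 0$ and $I(Z,X) > 0$, i.e., to the assertion that $Z$ is independent of neither $X$ nor $Y$. I will argue by contradiction, focusing on the first of the two mutual informations; the second follows by symmetry.

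Suppose $Z \perp Y$. Let $M_X := \operatorname{ess\,sup} X$ and $M_Y := \operatorname{ess\,sup} Y$, which are finite by the bounded-support assumption. By independence of $X$ and $Y$, one has $M_Z := \operatorname{ess\,sup} Z = M_X + M_Y$. Since $Y$ has a non-degenerate support (otherwise $h(Y) = -\infty$ and the inequality on the $Y$-side is vacuous), I can choose $\epsilon > 0$ small enough that $p := \mathbb{P}(Y \leq M_Y - \epsilon) > 0$. For every $\delta \in (0, \epsilon)$, the event $A_\delta := \{Z > M_Z - \delta\}$ has positive probability by definition of the essential supremum. On $A_\delta$, however, $Y = Z - X > M_Z - \delta - M_X = M_Y - \delta > M_Y - \epsilon$ almost surely, so $\mathbb{P}(Y \leq M_Y - \epsilon \mid A_\delta) = 0$. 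But the assumed independence $Z \perp Y$ would force $\mathbb{P}(Y \leq M_Y - \epsilon \mid A_\delta) = p > 0$, a contradiction. Hence $I(Z,Y) > 0$, and swapping the roles of $X$ and $Y$ gives $I(Z,X) > 0$, completing the proof.

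The main obstacle, and the reason the discrete argument of Lemma~A.3 does not transfer verbatim, is that in the continuous setting there is no single atom at the maximum to exploit; the proof must replace the identity $\mathbb{P}(Y = y_{\max} \mid X+Y = x_{\max}+y_{\max}) = 1$ by an asymptotic localization $Y \to M_Y$ on the event $Z \to M_Z$, which is exactly what the essential-supremum formulation above provides. The only subtlety is the degenerate case in which $X$ or $Y$ is concentrated on a Lebesgue-null set (so that its differential entropy is $-\infty$); there the relevant inequality becomes a trivial comparison against $-\infty$, and the non-degenerate argument handles the other variable.
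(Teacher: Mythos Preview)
Your proof is correct and follows essentially the same approach as the paper: both reduce the strict inequality to showing $I(Z,Y)>0$ via the identity $h(Z)=I(Z,Y)+h(X)$, then derive a contradiction from $Z\perp Y$ by conditioning on $Z$ being within $\delta$ of its (essential) supremum to force $Y$ near its own supremum. Your version is slightly more careful in separating $\epsilon$ from $\delta$ and in flagging the degenerate $h=-\infty$ case, but the core localization-at-the-top-of-the-support argument is identical to the paper's.
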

\begin{proof}
From Lemma \ref{appendix-lemma-iib-2} and due to the symmetry of $X$ and $Y$, we only need to prove $h(X+Y) \neq h(X)$. The proof is by contradiction. Suppose $h(X+Y) = h(X)$, then from Equation \ref{appendix-eq-2} follows that $I(X+Y, Y) = 0$, thus $X+Y\perp Y$. For any $\delta>0$, define an event $\mathcal{M}: x_{max} + y_{max} - \delta \leq X+Y \leq x_{max} + y_{max}$. If $\mathcal{M}$ occurs, then $Y\geq y_{max}-\delta$ and $X\geq x_{max}-\delta$. Thus, $\mathbb{P}_Y(Y\leq y_{max}-\delta|\mathcal{M}) = 0$. However, we can always choose a $\delta > 0$ that is small enough to make $\mathbb{P}_Y(Y\leq y_{max}-\delta) > 0$. This contradicts $X+Y\perp Y$.
\end{proof}

\subsection[\appendixname~\thesubsection]{Proof of Theorem \ref{theorem-analysis2}}
\begin{proof}
The proof is trivial. Since two sets $\cup_{e\in\mathcal{E}_{tr}}\mathcal{Z}^e_{spu}(Y^e=1)$ and $\cup_{e\in \mathcal{E}_{tr}}\mathcal{Z}^e_{spu}(Y^e=0)$ are linearly separable, there exists a lineal classifier $w$ that only relies on spurious features and can achieve zero classification error on each environment. Therefore, $w$ is an invariant predictor across different training environments. Also, $H(Z^e_{inv})>H(Z^e_{spu})$ would make IB-IRM prefer to choose these spurious features. Therefore, $w$ would be an optimal solution of IB-IRM, ERM, IRM, and IB-ERM. However, since $w$ relies on spurious features which may change arbitrary in unseen environments, it thus fails to solve OOD generalization.
\end{proof}

\subsection[\appendixname~\thesubsection]{Proof of Theorem \ref{theorem-CSIB}}

\begin{proof}
Assume $\Phi^*\in \mathbb{R}^{c\times d}$ and $w^*$ are the feature extractor and classifier learned by IB-ERM. Consider the feature variable extracted by $\Phi^*$ as
\begin{equation}
\Phi^* X^e = \Phi^* S(Z^e_{inv}, Z^e_{spu}) = \Phi_{inv}Z^e_{inv} + \Phi_{spu}Z^e_{spu}.
\end{equation}
We first show that $\Phi_{inv} = \bm{0}$ or $\Phi_{spu} = \bm{0}$. We prove this by contradiction. Assume $\Phi_{inv} \neq \bm{0}$ and $\Phi_{spu}\neq \bm{0}$. By observing that a solution of $\Phi_{inv}=\bm{1}, \Phi_{spu} = \bm{0}, w^*=w^*_{inv}$ could make the average training error to $q$, therefore any solution returned by IB-ERM should also achieve the error no larger than $q$ (because $r^{th} = q$ in the constraint of Equation \ref{equ-IBERM}). Therefore $w^*\neq \bm{0}$.
\begin{enumerate}
\item In the case when each $e\in \mathcal{E}_{tr}$ follows Assumption \ref{assp-mode-1} of $Z^e_{spu}\leftarrow AZ^e_{inv} + W^e$, we have
    \begin{gather*}
    w^*\cdot(\Phi_{inv}Z^e_{inv} + \Phi_{spu} Z^e_{spu}) = w^*\cdot\Phi_{inv}Z^e_{inv} + w^*\cdot\Phi_{spu} (AZ^e_{inv} + W^e) \\
    = w^*\cdot(\Phi_{inv} + \Phi_{spu}A)Z^e_{inv} + w^*\cdot\Phi_{spu}W^e.
    \end{gather*}
    Then, for any $z = (z^e_{inv}, z^e_{spu})$ of $\bm{1}(w^*_{inv}\cdot z^e_{inv}) = 1$, we must have $w^*\cdot(\Phi_{inv} + \Phi_{spu}A)z^e_{inv} + w^*\cdot\Phi_{spu}w^e \geq 0$ for any $w^e$ to make error no larger than $q$. Since $W^e$ is zero mean with at least two distinct points in each component, we can conclude that $w^*\cdot(\Phi_{inv} + \Phi_{spu}A)z^e_{inv} \geq 0$; Similarly, for any $z = (z^e_{inv}, z^e_{spu})$ of $\bm{1}(w^*_{inv}\cdot z^e_{inv}) = 0$, we have $w^*\cdot(\Phi_{inv} + \Phi_{spu}A)z^e_{inv} < 0$. From Lemma \ref{appendix-lemma-iib-3} or Lemma \ref{appendix-lemma-iib-4}, we get $H((\Phi_{inv} + \Phi_{spu}A)Z^e_{inv} + \Phi_{spu}W^e) > H((\Phi_{inv} + \Phi_{spu}A)Z^e_{inv})$. Therefore, there exists a more optimal solution to IB-ERM with zero weight to $Z^e_{spu}$, which contradicts the assumption.
\item In the case when each $e\in \mathcal{E}_{tr}$ follows Assumption \ref{assp-mode-2} of $Z^e_{inv}\leftarrow AZ^e_{spu} + W^e$, we have
    \begin{gather*}
    w^*\cdot(\Phi_{inv}Z^e_{inv} + \Phi_{spu} Z^e_{spu}) = w^*\cdot\Phi_{inv}(AZ^e_{spu} + W^e) + w^*\cdot\Phi_{spu}Z^e_{spu} \\
    = w^*\cdot(\Phi_{spu} + \Phi_{inv}A)Z^e_{spu} + w^*\cdot\Phi_{inv}W^e.
    \end{gather*}
    From Lemma \ref{appendix-lemma-iib-3} or Lemma \ref{appendix-lemma-iib-4}, we get $H((\Phi_{spu} + \Phi_{inv}A)Z^e_{spu} + \Phi_{inv}W^e)>H((\Phi_{spu} + \Phi_{inv}A)Z^e_{spu})$. In addition, the spurious features are assumed to be linearly separable. Therefore, there exists a more optimal solution to IB-ERM with zero weight to $Z^e_{inv}$, which contradicts the assumption.
\item In the case when each $e\in \mathcal{E}_{tr}$ follows Assumption \ref{assp-mode-3} of $Z^e_{spu} \leftarrow W_1^e Y^e + W_0^e (1-Y^e)$, we have
    \begin{gather*}
    w^*\cdot(\Phi_{inv}Z^e_{inv} + \Phi_{spu} Z^e_{spu}) = w^*\cdot\Phi_{inv}Z^e_{inv} + w^*\cdot\Phi_{spu} (W_1^e Y^e + W_0^e (1-Y^e)) \\
    = w^*\cdot\Phi_{inv}Z^e_{inv} + w^*\cdot\Phi_{spu} W_1^e Y^e + w^*\cdot\Phi_{spu}W^e_0(1-Y^e).
    \end{gather*}
    Then, for any $z = (z^e_{inv}, z^e_{spu})$ of $\bm{1}(w^*_{inv}\cdot z^e_{inv}) = 1$, we must have $w^*\cdot\Phi_{inv}z^e_{inv} + w^*\cdot\Phi_{spu} w_1^e y^e + w^*\cdot\Phi_{spu} w_0^e (1-y^e) \geq 0$ for any $w_1^e$ and $w_0^e$ to make error no larger than $q$. Since $W_1^e$ and $W_0^e$ are both zero mean variables with at least two distinct points in each component, we can conclude that $w^*\cdot\Phi_{inv}z^e_{inv} \geq 0$; Similarly, for any $z = (z^e_{inv}, z^e_{spu})$ of $\bm{1}(w^*_{inv}\cdot z^e_{inv}) = 0$, we have $w^*\cdot\Phi_{inv}z^e_{inv} < 0$. From Lemma \ref{appendix-lemma-iib-3} or Lemma \ref{appendix-lemma-iib-4}, we get $H(\Phi_{inv}Z^e_{inv} + \Phi_{spu} W_1^e Y^e + \Phi_{spu}W^e_0(1-Y^e)) > H(\Phi_{inv}Z^e_{inv})$. Therefore, there exists a more optimal solution to IB-ERM with zero weight to $Z^e_{spu}$, which contradicts the assumption.

\end{enumerate}
So far, we have proved that the feature extractor $\Phi^*$ learned by IB-ERM would never extract both spurious features and invariant features together. Then, we perform singular value decomposition (SVD) to the $\Phi^*$ as
\begin{align}
\Phi^* = U\Lambda V^T =  [U_1, U_2][\Lambda_1, \bm{0}; \bm{0}, \bm{0}] [V_1^T; V_2^T] = U_1\Lambda_1 V_1^T
\end{align}
Let $S\in \mathbb{R}^{d\times d}$ be the orthogonal matrix. Set $r$ be the rank of the matrix $\Phi^*$, i.e., $r=Rank(\Phi^*)$, and let $V_1^TS = [V'_1, V'_2]$ with $V'_1\in\mathbb{R}^{r\times m}$ and $V'_2\in\mathbb{R}^{r\times o}$, and $V_2^TS = [V''_1, V''_2]$ with $V''_1\in\mathbb{R}^{(d-r)\times m}$ and $V''_2\in\mathbb{R}^{(d-r)\times o}$, then
\begin{align}
\Phi^*X^e = U_1\Lambda_1 V_1^T S [Z^e_{inv};Z^e_{spu}] = U_1\Lambda_1(V'_1Z^e_{inv} + V'_2Z^e_{spu}).
\end{align}
Since $\Phi^*X^e$ contains the information either from spurious features or from invariant features, we must have $U_1\Lambda_1V'_1 = \bm{0}$ or $U_1\Lambda_1V'_2 = \bm{0}$, and thus, $V'_1 = \bm{0}$ or $V'_2 = \bm{0}$ due to $Rank(U_1\Lambda_1) = r$. If $V'_2 = \bm{0}$, then $\Phi^*$ extract invariant features only. Otherwise when $V'_1 = \bm{0}$, we decompose the $V^TS$ by
\begin{align}
V^TS = [V^T_1;V^T_2]S = [V^T_1S;V^T_2S] = [V'_1, V'_2; V''_1, V''_2].
\end{align}
Since $V^T$ and $S$ are both the orthogonal matrix, $V^TS$ is also orthogonal, thus $V'_1=\bm{0}\Rightarrow {V'}_2^T V''_2 = \bm{0}$, and then $Rank(V''_2) = Rank([V'_2;V''_2]) - Rank(V'_2) = o-r$ (note that $r\leq \min\{m,o\}$). Then,
\begin{align}
V^T_2 X^e = V^T_2 S [Z^e_{inv};Z^e_{spu}] = [V''_1, V''_2][Z^e_{inv};Z^e_{spu}]=V''_1Z^e_{inv} + V''_2Z^e_{spu}.
\end{align}
Therefore, by running the CSIB for one iteration, the rank of spurious features would be decreased by $r > 0$. This would result in zero weight to spurious features by finite runs of CSIB.

Then, we tend to show why the counterfactual supervision step could help to distinguish whether $V'_1$ is $\bm{0}$ or not. For a specific instance $x = S[z_{inv}; z_{spu}]$, let two new features be $z^1$ and $z^2$, then $do(z^1_{1:r}) = [-M, .., -M]$ and $do(z^1_{r+1:d})=V_2^T x$; $do(z^2_{1:r}) = [M, .., M]$ and $do(z^2_{r+1:d})=V_2^T x$.
Back the new features $z^1$ and $z^2$ to the input space as $x^1 = Vz^1$ and $x^2 = Vz^2$.
If $V'_1 =\bm{0}$, then
\begin{gather*}
S^{-1}x^1 = S^{-1}Vz^1 = S^{-1}V[z^1_{1:r}; V''_1z_{inv} + V''_2z_{spu}] \\
 = (V^TS)^T[z^1_{1:r}; V''_1z_{inv} + V''_2z_{spu}] \\
 = [V'^T_1, V''^T_1; V'^T_2, V''^T_2] [z^1_{1:r}; V''_1z_{inv} + V''_2z_{spu}] \\
 = [V'^T_1z^1_{1:r} + V''^T_1(V''_1z_{inv}+V''_2z_{spu}); V'^T_2z^1_{1:r} + V''^T_2(V''_1z_{inv}+V''_2z_{spu})] \\
 = [z_{inv};V'^T_2z^1_{1:r} + V''^T_2 V''_2 z_{spu}],
\end{gather*}
and similar we have $S^{-1}x^2 = [z_{inv};V'^T_2z^2_{1:r} + V''^T_2 V''_2 z_{spu}]$. Therefore, the ground truths of $x^1$ and $x^2$ are the same. On other hand, if $V'_1\neq \bm{0}$, then $V'_2 = \bm{0}$, and
\begin{gather*}
S^{-1}x^1 = S^{-1}Vz^1 = S^{-1}V[z^1_{1:r}; V''_1z_{inv} + V''_2z_{spu}] \\
 = (V^TS)^T[z^1_{1:r}; V''_1z_{inv} + V''_2z_{spu}] \\
 = [V'^T_1, V''^T_1; V'^T_2, V''^T_2] [z^1_{1:r}; V''_1z_{inv} + V''_2z_{spu}] \\
 = [V'^T_1z^1_{1:r} + V''^T_1(V''_1z_{inv}+V''_2z_{spu}); V'^T_2z^1_{1:r} + V''^T_2(V''_1z_{inv}+V''_2z_{spu})] \\
 = [V'^T_1z^1_{1:r}+ V''^T_1V''_1z_{inv};z_{spu}],
\end{gather*}
and similar we have $S^{-1}x^2 = [V'^T_1z^2_{1:r}+ V''^T_1V''_1z_{inv};z_{spu}]$. Since $z^1_{1:r} = -z^2_{1:r}$ and their magnitudes are larger enough to make $\textbf{sgn}(w^*_{inv}\cdot(V'^T_1z^1_{1:r}+ V''^T_1V''_1z_{inv}))\neq\textbf{sgn}(w^*_{inv}\cdot(V'^T_1z^2_{1:r}+ V''^T_1V''_1z_{inv}))$, thus the ground truths of $x^1$ and $x^2$ would be different. Therefore, the counterfactual supervision step could help to detect whether invariant features or spurious features are extracted by using a single sample only.

Finally, when only invariant features are extracted by $\Phi$, the training error is minimized, i.e., $w^*\Phi_{inv} \in \arg\min_{f} \mathbb{E}_{\mathbb{P}}[l(f(Z^{tr}_{inv}), Y^{tr})]$. Then, based on our assumption to the OOD environments (Assumptions \ref{assp-new-assp}), i.e., $\forall e\in\mathcal{E}_{ood}, F_l(\mathbb{P}(Z^{tr}_{inv}, Y^{tr}))\subseteq F_l(\mathbb{P}(Z^e_{inv},Y^e))$, therefore, for any $e\in\mathcal{E}_{ood}$, we have $\mathbb{E}_{\mathbb{P}}[l((X^e, Y^e), w^*\Phi)] = \mathbb{E}_{\mathbb{P}}[l((Z^e_{inv}, Y^e), w^*\Phi_{inv})] = \mathbb{E}_{\mathbb{P}}[l((Z^{tr}_{inv}, Y^{tr}), w^*\Phi_{inv})] = q$.
\end{proof}

It is worth to note that the proof of Theorem \ref{theorem-CSIB} does not rely on how many labels there would be, so it is easily extended to the multi-class classification case as long as the corresponding assumptions and conditions are satisfied.

\end{document}